\documentclass[11pt]{article}

\usepackage[utf8]{inputenc}
\usepackage[margin=1in]{geometry}
\usepackage{amsmath, amssymb, amsfonts, amsthm}
\usepackage{mathtools}
\usepackage{graphicx}
\usepackage{booktabs}
\usepackage{multirow}
\usepackage{subcaption}
\usepackage{float}
\usepackage{xcolor}
\usepackage[authoryear,round,sort&compress]{natbib}
\usepackage[ruled,vlined]{algorithm2e}
\definecolor{linkcol}{rgb}{0,0,0.55}
\definecolor{citecol}{rgb}{0,0.45,0}
\definecolor{urlcol}{rgb}{0.55,0,0}
\usepackage[colorlinks=true,linkcolor=linkcol,citecolor=citecol,urlcolor=urlcol]{hyperref}
\PassOptionsToPackage{hyphens}{url}
\usepackage[T1]{fontenc}
\usepackage[scaled=.98,sups]{XCharter}
\usepackage[charter,vvarbb,scaled=1.05]{newtxmath}

\newcommand{\coderelease}{at \url{https://github.com/sushovan4/disentanglement}}

\newenvironment{experiment}[1]{%
  \par\medskip\noindent\textbf{Experiment (#1).}\ \ignorespaces}{%
  \par\medskip}

\newtheorem{theorem}{Theorem}
\newtheorem{proposition}{Proposition}
\newtheorem{lemma}{Lemma}
\newtheorem{corollary}{Corollary}
\newtheorem{definition}{Definition}

\theoremstyle{remark}

\newcommand{\R}{\mathbb{R}}

\newcommand{\E}{\mathbb{E}}

\newcommand{\U}{\mathcal{U}}
\newcommand{\Dx}{\Delta\chi}

\title{Certified Topological Interaction in Neural Representations:\\
Exact Tests and the Statistic They Require}

\author{Sushovan Majhi\thanks{Data Science Program, George Washington University, Washington, D.C., USA. Email: \texttt{s.majhi@gwu.edu}.}}

\date{}

\begin{document}
\maketitle

\begin{abstract}
Class disentanglement---the separation of a representation's class-conditional point clouds along depth and over training---is measured by descriptive curves: the sentence such a study wants to write, \emph{layer $\ell+1$ is more disentangled than layer $\ell$}, is an eyeball judgement with no null. We supply the inferential layer for a topological measurement of class overlap, the Intersection Euler Characteristic Profile: the Euler characteristic of the overlap of the clouds' ball unions as a function of scale, from one Alpha-complex sweep with no boundary-matrix reduction. Every number carries a test---exact permutation tests in both directions, a guarded separation certificate the invariant requires, and a paired sign-flip test for comparative claims.

Building that test taught a lesson outliving this invariant: \emph{its statistic must be scale-free}. On the raw profile mass, which has units of feature length, $12{,}375$ paired tests return $5{,}633$ significant steps of which every one at the first epoch points the wrong way, certifying feature-norm dynamics as disentanglement; the dimensionless statistic returns $2{,}707$, with $2{,}026$ decreases.

Across $111$ networks and $52{,}650$ measurements, disentanglement is depth-graded and early, and interaction quotients rank class pairs by confusability ($\rho=0.83$), on par with cheap separability statistics. In a $96$-model factorial, augmentation is the one training choice that separates classes relative to chance; weight decay compresses the overlap without separating. Only a $k$-fold statistic can pose the structural question: the joint entanglement of a class triple sits below its strongest pair in $97\%$ of triple--layer cells and $99.5\%$ of deep cells, at median ratios far below a measured null floor, in vision encoders and frozen language models---a regularity, not a law. The unnormalized mass predicts test accuracy ($R^2=0.94$), the quotient does not, and neither beats a linear probe.
\end{abstract}

\section{Introduction}\label{sec:intro}

A trained image classifier is, up to its last layer, a map from images to points in a feature space, and the images of two classes---cats and dogs, say---trace out two point clouds there. At the input these clouds are hopelessly intermixed, since a cat and a dog differ mainly in pose and background; at the output they are cleanly apart, since the classifier's last step is a linear cut. Somewhere in between, the network pulls them apart. Figure~\ref{fig:intro_strip} shows this for one pair through the stages of a residual network: at the stem the clouds interpenetrate everywhere, two stages later they are still mixed, and at the third stage they have separated. This paper is about measuring that separation---how much two, or more, classes still overlap at a given layer and at which scales, and whether an observed change between two layers, or two epochs, is more than chance.

\begin{figure}[t]
\centering
\includegraphics[width=\textwidth]{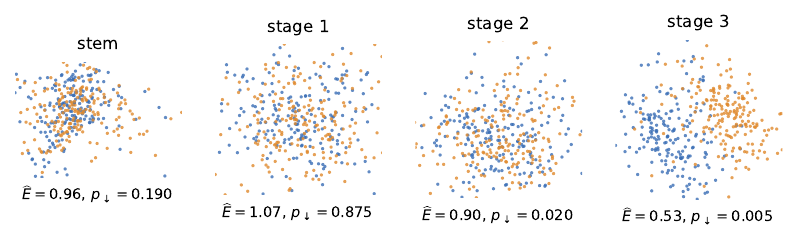}
\caption{What the paper measures. Cat (blue) against dog (orange) through the stages of the CIFAR-10 ResNet-56: a two-dimensional shadow of each stage's $d_0=5$ clouds ($m=200$ per class), with the interaction quotient and separation $p$-value ($B=199$) under each panel. The pair is exchangeable with a random relabeling through stage~1, first certified separated at stage~2, and at quotient $0.53$ at stage~3, the penultimate representation. Definitions in Sections~\ref{sec:background}--\ref{sec:certified}.}
\label{fig:intro_strip}
\end{figure}

\subsection{Class separation, and how it has been measured}\label{sec:separation}

The geometric reading is by now standard. Each class $c$ induces a point cloud $X_c^{(\ell)} = \{f_\ell(x) : \text{label}(x) = c\}$ in the representation at layer $\ell$, and learning can be read as these clouds becoming progressively \emph{disentangled}---topologically separated---with depth and with training. We use ``disentanglement'' throughout in this sense of \emph{class separation} in feature space, following \citet{wagner2024mixup}. It is distinct from the factorized-latent sense of $\beta$-VAE \citep{higgins2017betavae} and of the DCI and mutual-information-gap metrics \citep{eastwood2018framework,chen2018isolating}, in which individual coordinates track independent generative factors; neither sense implies the other, and we make no claim about factorized structure. We do take one lesson from that literature: \citet{locatello2019challenging} showed that its conclusions had been driven by metric choice and random seed, a fact visible only in a sweep over thousands of models, and that descriptive scores without a null distribution have no way to be wrong.

That classes separate has been established repeatedly, in four largely disjoint vocabularies. \emph{Probes}: a linear classifier trained on each layer's features grows more accurate monotonically with depth \citep{alain2017probes}. \emph{Manifold capacity}: in the statistical-mechanics account, per-class object manifolds shrink and decorrelate along the hierarchy and the capacity of a downstream perceptron grows accordingly \citep{cohen2020separability}. \emph{Terminal geometry}: neural collapse \citep{papyan2020prevalence} describes the late-training limit in which class means form a simplex and within-class variability vanishes, the endpoint of disentanglement; its within-class collapse statistic has since been measured layer by layer \citep{rangamani2023feature}, and the generalized discrimination value of \citet{schilling2021quantifying} is a per-layer class-separability score in the same spirit. \emph{Global similarity}: CKA \citep{kornblith2019similarity} compares whole representations across layers and models, and intrinsic dimension falls through the final layers, with the last hidden layer's dimension predicting test accuracy \citep{ansuini2019intrinsic}; neither is class-conditional. Closest to our object, \citet{naitzat2020topology} track the Betti numbers of class regions through depth and show that networks simplify topology layer by layer; their object is the topology of one region at a time, ours the interaction between several.

Each of these is a real measurement, and none of them is a measurement of \emph{overlap}. Probes and capacity are linear-separability notions and answer a question about a decision boundary rather than about the geometry of the clouds: two classes can be perfectly linearly separable while their thickened supports interpenetrate at every scale below the margin, and conversely. CKA and intrinsic dimension are not defined for a specified class subset, so they cannot say \emph{which} classes are still entangled. Collapse statistics and discrimination values are descriptive scores without a null distribution. And none is defined for a set of $k>2$ classes jointly, so none can pose the question of whether three classes remain jointly entangled when no pair does. Quantifying the overlap itself gives a label for representation quality, a diagnostic for training dynamics, and, potentially, a predictor of generalization.

\subsection{Topological interaction, and the inference it lacks}\label{sec:interaction}

\citet{wagner2024mixup} measure pairwise class entanglement with the \emph{mixup barcode},\footnote{The name ``mixup'' is shared with the augmentation of \citet{zhang2018mixup}, with which the barcode has nothing technical in common. The invariant used here is named for the \emph{intersection} of ball unions it measures, following \citet{kawamura2026intersection}.} summarized by a scalar ``total mixup,'' and show that it tracks how a network disentangles its classes through depth and training. The construction has four limitations that matter precisely for this application. It is \emph{pairwise}, defined for an inclusion $L \hookrightarrow K$ of two clouds, and \emph{asymmetric}, since a foreground cloud is privileged; it is \emph{descriptive}, in that no hypothesis test certifies that an observed change exceeds chance; and it is \emph{expensive}, because image-persistence reduction costs $O(N^3)$ in the simplex count, which limits how many layers, epochs, and seeds can be swept. Chromatic topological data analysis \citep{cultrera2024chromatic} also studies several colored clouds, through a six-pack of persistence diagrams built on chromatic Delaunay complexes; it is richer but costlier, likewise untested, and not specialized to representation analysis. Elsewhere, topological data analysis has been applied to network weights \citep{rieck2019neural}, as a differentiable regularizer on latent spaces \citep{moor2020topological} and on class separation at decision boundaries \citep{chen2019topological}, to the persistent homology of labeled complexes that characterize decision boundaries \citep{ramamurthy2019topological}, to neuron correlations as a predictor of test error without a test set \citep{corneanu2020computing}, and to the fractal geometry of optimization trajectories, where persistent-homology dimension predicts generalization \citep{birdal2021intrinsic}. Euler characteristic curves and profiles as a stable, reduction-free shape summary are due to \citet{dlotko2023euler}, are by now an established cheap alternative to persistence in learning pipelines \citep{hacquard2024euler}, and have been made differentiable as transforms \citep{roell2024differentiable}; we apply them to the \emph{interaction} of class clouds rather than to a single dataset.

The third limitation is not peculiar to the mixup barcode. Probe accuracy, manifold capacity, CKA, intrinsic dimension, and collapse statistics are all reported in practice as descriptive numbers, so the sentence a representation-analysis paper actually wants to write---``layer $\ell+1$ is more disentangled than layer $\ell$,'' ``this epoch is where the classes separate''---is usually an eyeball judgment about a curve, with no null distribution behind it and no control of the error rate incurred by reading hundreds of such curves. The inferential tools exist and are textbook: a label-permutation test of exchangeability and a sign-flip test on paired differences \citep{lehmann2005testing}. Permutation-calibrated two-sample tests of the same exchangeable null are standard in machine learning---kernel MMD \citep{gretton2012kernel}, energy distance \citep{szekely2013energy}, classifier two-sample tests \citep{lopezpaz2017revisiting}---and any of the descriptive separability scores above can be calibrated the same way, as we do for four of them (Experiment~E3). In topological data analysis, confidence sets for persistence diagrams separate signal from noise in a single sample \citep{fasy2014confidence}, permutation tests for collections of diagrams are due to \citet{robinson2017hypothesis}, topological goodness-of-fit tests to \citet{dlotko2022topotests}, a universal null distribution for persistence-based statistics to \citet{bobrowski2023universal}, and minimax hypothesis tests and confidence sets for populations of diagrams, through their landmark embeddings, to \citet{bagchi2026inference}. The object tested there is the shape of one sample or a comparison of diagrams, not the interaction of two labeled samples, and we are not aware of a paired test for topological summaries. What is missing, then, is not an inferential layer for the field but the routine application of these tools to comparative claims about representations. Doing that for a topological measure of class overlap---together with the guard and the scale-free paired statistic that this particular measure turns out to need---is the main business of this paper.

\subsection{Our contributions}\label{sec:thispaper}

We measure class disentanglement with the \emph{Intersection Euler Characteristic Profile} (Intersection ECP) of \citet{kawamura2026intersection}: the Euler characteristic of the region where the thickened class clouds overlap, as a function of the thickening scale (Section~\ref{sec:background}). It removes each of the four limitations. It is symmetric; it is defined for every $k$, so that it measures joint, multi-class entanglement and not only pairs; it is computed by a single sorted Alpha-complex sweep with no matrix reduction; and it admits an exact, distribution-free permutation test. Section~\ref{sec:certified} turns it into a measurement protocol with a certificate attached to every number: exact one-sided tests in both directions, the \emph{guarded} separation certificate that the Euler characteristic specifically needs, a paired sign-flip test on a scale-free statistic for the comparative claims that trajectory analyses actually make, and an interaction quotient comparable across layers, epochs and models. The guarantees are elementary and are proved in Appendix~\ref{app:inference}. ``Certified'' here means a hypothesis test with an exact level, not the margin certificate that landmark-based persistence vectorizations now carry \citep{majhi2026place,majhi2026palace}.

One requirement of that protocol is not a technicality and is the paper's most portable result. A paired test compares a statistic across two conditions, so the statistic must be \emph{scale-free}; the natural one here, the profile mass, has units of feature length. Paired on the raw mass, our first campaign certified $5{,}633$ epoch-to-epoch steps and pointed the wrong way at every significant step of the first epoch, because the permutation-null mean of the mass grows twentyfold there as the features spread out. The dimensionless statistic certifies $2{,}707$ steps and points the right way. The absolute certificates, which divide by a null computed on the same grid, were never affected; only the comparative ones were. Any trajectory analysis resting on an unnormalized geometric statistic---and most representation-analysis statistics are unnormalized---meets this failure mode, and Experiment~E3 reports ours in full.

The campaign of Section~\ref{sec:experiments} covers $111$ trained networks and $52{,}650$ certified measurements. Its findings are four.
\begin{itemize}
  \item \emph{Disentanglement is certified, depth-graded and early.} The interaction quotient ranks MNIST pairs by confusability (Spearman $\rho=0.83$; E1); deep layers do almost all of the separating (E2), and most certified changes fall within the first eight epochs (E3); a from-scratch ViT disentangles non-monotonically, an artifact that pretraining removes (E2). The tests are calibrated (E4), and every conclusion survives the choice of projection dimension (E8).
  \item \emph{Pairwise dominance}, the finding only a $k$-fold statistic can pose. The joint entanglement of a class triple sits below that of its strongest pair in $97\%$ of triple--layer cells and $99.5\%$ of deep cells---a rate that means little by itself, since the score divides one quotient by a maximum of three and an unstructured family scores $83\%$---and does so at median ratios far below that measured floor, with the exceptions listed (E2) and the interaction spectrum decaying in order (E7). It is a regularity, not a law: expected from the nesting of overlaps but not forced by geometry, present at initialization and in raw pixels, manufactured in the last stage alone when a network memorizes random labels (E9), and reproduced on two frozen language models (E10).
  \item \emph{What the measure is for.} In a $96$-model factorial population, augmentation is the one training choice that separates classes relative to chance; weight decay compresses the overlap without separating; depth and width do nothing (E11). The unnormalized profile mass predicts test accuracy ($R^2=0.94$), the quotient does not, and neither beats a linear probe (E5).
  \item \emph{A differentiable surrogate steers the wrong way}, in two implementations and each for an identifiable reason (E6), which is itself an argument for keeping the certificate non-differentiable.
\end{itemize}
Following \citet{turkes2022effectiveness}, we ask on which tasks the topological statistic beats the alternatives and say where it does not: for ranking pairwise confusability and for timing disentanglement over training, a nearest-neighbour cross-class rate on the same clouds does as well (E1, E3). Its value lies in the certificate, the $k$-fold terms, and the scale resolution, which the cheap statistics do not offer.

\section{The Intersection ECP}\label{sec:background}

We recall only what is needed; all constructions, proofs, stability, and complexity are in \citet{kawamura2026intersection}.

\begin{figure}[t]
\centering
\includegraphics[width=\textwidth]{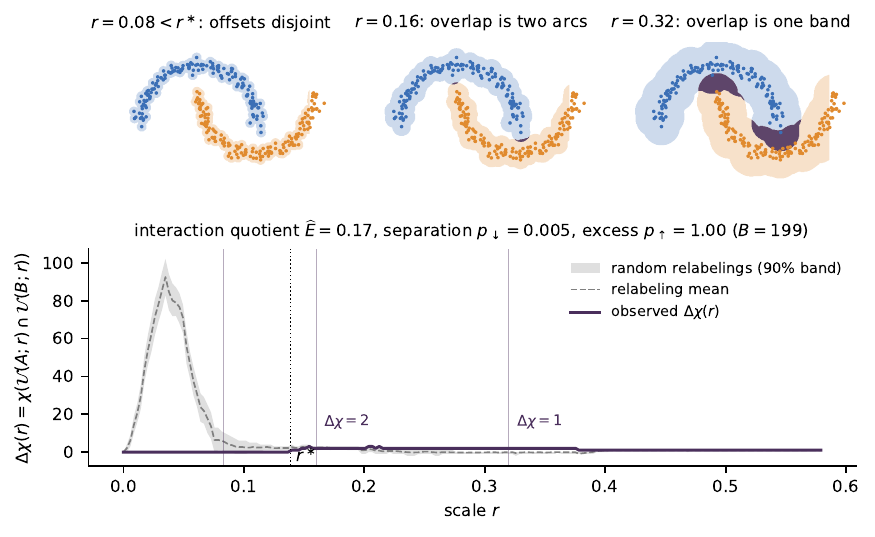}
\caption{The measurement, on two interlocking clouds. \emph{Top:} the offsets at three scales with the intersection shaded---disjoint below $r^\ast$, then two arcs ($\chi=2$), then one band ($\chi=1$). \emph{Bottom:} the profile $\Dx(r)$ against random relabelings of the pooled points (mean and $90\%$ band): quotient $0.17$, separation certified at $p_\downarrow=0.005$, no excess.}
\label{fig:intro_profile}
\end{figure}

\subsection{The invariant and its properties}\label{sec:invariant}

Write $\U(S;r)=\bigcup_{p\in S}B(p,r)$ for the $r$-offset of a set $S\subseteq\R^d$; following \citet{kawamura2026intersection} we use the same notation whether $S$ is a finite cloud or a compact support, so that sample and population offsets read alike. For finite clouds $X_1,\dots,X_k\subset\R^d$ and scales $\mathbf t=(t_1,\dots,t_k)$, the Intersection ECP is
\[
    \Dx(\mathbf t;\, X_1,\dots,X_k) \;=\; \chi\Big(\textstyle\bigcap_{i=1}^k \U(X_i; t_i)\Big),
\]
the Euler characteristic of the overlap of the thickened clouds. On the diagonal $\mathbf t=(r,\dots,r)$ we write $\Dx(r)$; for $k=2$ it is the inclusion--exclusion defect $\chi_X(r)+\chi_Y(r)-\chi_{X\cup Y}(r)$. It is a piecewise-constant integer curve in $r$, identically zero below first contact \citep[Lem.~3.1]{kawamura2026intersection}, equal to the number of contact points at first contact \citep[Lem.~3.2]{kawamura2026intersection}, and thereafter zero exactly when the overlap's Betti numbers cancel (Section~\ref{sec:blind}). Figure~\ref{fig:intro_profile} shows the construction and the test on two clouds in the plane.

Four properties carry the paper. Three are established in \citet{kawamura2026intersection}; the remaining one, the inferential layer (P3), is what the present paper supplies---\citeauthor{kawamura2026intersection} develop the invariant, its stability, and its sampling theory, and leave hypothesis testing to future work.
\begin{enumerate}
  \item[\textbf{P1}] \emph{Symmetry and $k$-naturality.} $\Dx$ treats the clouds symmetrically and is defined for every $k$ by intersecting $k$ ball unions; no foreground class, no choice of order.
  \item[\textbf{P2}] \emph{Cheap computation.} $\Dx(\cdot;X,Y)$ is computed from the Euler characteristic curves of the Alpha complexes of $X$, $Y$, $X\cup Y$ by a single sorted sweep of signed simplex counts, in $O(n^{\lceil d/2\rceil}\log n)$ time with no boundary-matrix reduction \citep[Thm.~6.5]{kawamura2026intersection}---asymptotically below the $O(N^3)$ of image persistence.
  \item[\textbf{P3}] \emph{Hypothesis test.} An exact, distribution-free permutation test (label shuffling) certifies at level $\alpha$ that two clouds interact more---or, in the opposite tail, \emph{less}---than chance, for any sample size. This is developed in Section~\ref{sec:tests}.
  \item[\textbf{P4}] \emph{Stability.} under a Hausdorff perturbation of size $\epsilon$ the intersection filtrations are $\epsilon$-interleaved, their diagrams are within bottleneck distance $\epsilon$, and the profiles satisfy $d_{L^1}(\Dx,\Dx')\le2\epsilon(N+N')$ with $N,N'$ the bar counts \citep[Thm.~2.12, Prop.~3.12]{kawamura2026intersection}; the finite-sample guarantee proper is the recovery theorem (Section~\ref{sec:estimation}).
\end{enumerate}

\subsection{What $\Dx$ cannot see}\label{sec:blind}

\begin{figure}[t]
\centering
\includegraphics[width=\textwidth]{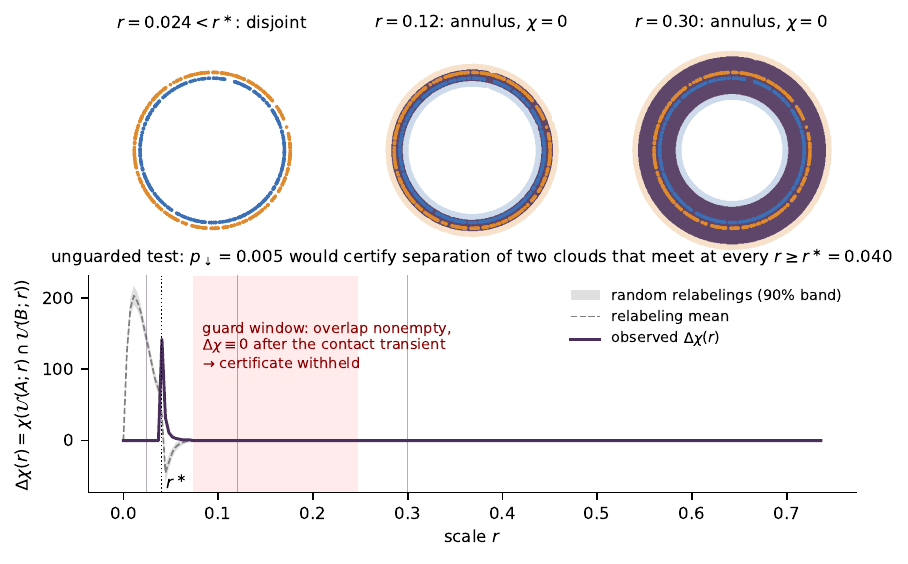}
\caption{The blind spot, and the guard. Two concentric rings meet from $r^\ast=0.04$ on, their overlap is an annulus, and $\chi$ vanishes on it at every larger scale, so the unguarded test would certify separation ($p_\downarrow=0.005$; Theorem~\ref{thm:false}). The guard sees a nonempty overlap with $\Dx\equiv0$ after the contact transient (shaded) and withholds the certificate.}
\label{fig:intro_blindspot}
\end{figure}

Because $\chi$ is an alternating sum, an overlap region with balanced Betti numbers is invisible: an annulus- or torus-shell-shaped intersection has $\chi=0$ across an entire scale plateau, bit-identical to \emph{no} overlap at all---a structural, scale-stable cancellation that the profile provably cannot detect \citep[\S4 and the saturation experiment therein]{kawamura2026intersection}. For disentanglement measurement this is a live failure mode, not a corner case: two class clouds arranged around a shared loop would register as perfectly separated (Figure~\ref{fig:intro_blindspot}). The quantity that disambiguates is the \emph{first interaction scale}
\[
    r^\ast \;=\; \inf\{r \ge 0 : \U(X;r)\cap\U(Y;r)\neq\emptyset\},
\]
which is half the smallest cross-class distance, the diagonal boundary of the dead zone of \citet[Lem.~3.2]{kawamura2026intersection}; it falls out of the same Alpha-complex sweep at no extra cost and separates ``zero because empty'' from ``zero because cancelling.'' Section~\ref{sec:tests} builds it into the separation certificate. When $\Dx\approx0$ on a range where $r^\ast$ certifies a nonempty overlap, the escalation is to the Betti curves of the overlap filtration---the overlap-homology comparison of their saturation experiment---or to the relative-homology refinement of \citet[\S4]{kawamura2026intersection} itself; either separates the two cases outright, at the cost of a persistence computation on the overlap alone.

\section{Certified measurement of disentanglement}\label{sec:certified}

Every number reported below carries a test. This section fixes the measurement---class clouds, projection, grid, and the interaction quotient---and then the tests: what each certifies, in which direction, and how multiplicity is controlled. The guarantees are elementary and are proved in Appendix~\ref{app:inference}; sharper procedures---structured multiplicity control along the spectrum's floors, and an analytic null that would calibrate without permutations---are left to future work.

\subsection{Class clouds, projection, and the interaction quotient}\label{sec:setup}

Fix a trained (or mid-training) network and a layer $\ell$ with feature map $f_\ell$. For classes $c_1,\dots,c_k$ draw $m$ examples each and form $X_i^{(\ell)}=\{f_\ell(x):\text{label}(x)=c_i\}$. Because the Alpha complex scales as $n^{\lceil d/2\rceil}$, we project to a low dimension $d_0$ by PCA before computing (E8 and Appendix~\ref{app:e8} study sensitivity to $d_0$); this matches the protocol of \citet{wagner2024mixup}. The projection is fitted once per layer on the \emph{pooled} features of all classes under test (the ten classes of a network; the pair itself in E1) and never sees the labels, and the scale grid below is likewise a function of the pooled cloud, so both are invariant under the relabelings of Section~\ref{sec:tests} and the permutation tests remain exact. Concretely, features are those of the held-out images (the test split; for the CIFAR-100 subset, whose test split has only $100$ images per class, the unaugmented training split of $500$ per class), the headline clouds are the first $m=200$ images of each class, and the paired tests of Section~\ref{sec:paired} use $F=8$ disjoint folds of $100$ images per class ($62$ for the CIFAR-100 subset) cut from the same projection. The scale grid has $200$ points from $0$ to half the median pairwise distance of the pooled pair, and every functional below is evaluated on it. The grid is a convenience, not a necessity: the profile is a step function whose breakpoints are the critical values of the three Alpha filtrations (some $7\times10^4$ per pair at $m=200$, $d_0=5$), so its mass could be summed exactly; on the $45$ MNIST pairs of E1 the grid mass is within $\pm2\%$ of the exact one, and the permutation tests are exact for either statistic. What the grid does fix is the range of scales, its upper end. Projection is also conservative in the direction we certify:

\begin{proposition}[Projection is conservative for separation]\label{prop:projection}
Let $P:\R^D\to\R^{d_0}$ be an orthogonal projection and let $X,Y\subset\R^D$ be finite. Writing $r^\ast(X,Y)=\tfrac12\min_{x\in X,y\in Y}\|x-y\|$ for the first interaction scale of Section~\ref{sec:blind},
\[
    r^\ast(PX,PY)\;\le\;r^\ast(X,Y).
\]
Consequently, if the projected offsets are disjoint at scale $r$---that is, $r<r^\ast(PX,PY)$---then $\U(X;r)\cap\U(Y;r)=\emptyset$ in the ambient representation as well.
\end{proposition}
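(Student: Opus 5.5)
The proof rests on one fact: an orthogonal projection is $1$-Lipschitz. I would first prove that fact, then apply it pointwise to cross-class distances, and finally read off the consequence from the ball-union characterization of $r^\ast$ given in Section~\ref{sec:blind}.

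\emph{Step 1 (contraction).} Here $P$ is orthogonal, so $P$ is linear, idempotent and self-adjoint. For any $v\in\R^D$ this gives the decomposition $v=Pv+(v-Pv)$ with $\langle Pv,\,v-Pv\rangle=0$. By Pythagoras, $\|Pv\|^2\le\|v\|^2$. We read $P$ as the PCA map $\R^D\to\R^{d_0}$, that is, as coordinates in an orthonormal basis of the image subspace, and such coordinates preserve norms. Hence $\|Px-Py\|=\|P(x-y)\|\le\|x-y\|$ for all $x,y\in\R^D$.

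\emph{Step 2 (minimum over pairs).} Take $x\in X$, $y\in Y$. Then
\[
\min_{x'\in X,y'\in Y}\|Px'-Py'\|\le\|Px-Py\|\le\|x-y\|.
\]
Minimizing the right-hand side over the finite set $X\times Y$ and halving gives $r^\ast(PX,PY)\le r^\ast(X,Y)$.

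\emph{Step 3 (consequence).} Suppose $r<r^\ast(PX,PY)$. By Step~2, $r<r^\ast(X,Y)$. Now assume, for contradiction, that some $z$ lies in $\U(X;r)\cap\U(Y;r)$. Then $\|z-x\|\le r$ and $\|z-y\|\le r$ for some $x\in X$, $y\in Y$. The triangle inequality gives $\|x-y\|\le 2r<2r^\ast(X,Y)$, which contradicts the definition of $r^\ast(X,Y)$ as half the minimum cross distance. So the ambient offsets are disjoint. This is exactly the ``half the smallest cross-class distance'' identification already stated in Section~\ref{sec:blind}, and I would invoke it directly.

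I do not expect a real obstacle; the argument is a chain of elementary inequalities. The one point needing care is that the identification of the infimum with half the minimum distance depends on using closed balls. With closed balls, first contact is attained at exactly $r=r^\ast$. That is why the consequence uses the strict inequality $r<r^\ast(PX,PY)$.
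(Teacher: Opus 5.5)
Your proof is correct and follows the paper's own argument. The paper uses that $P$ has operator norm $1$ to get $\|Px-Py\|\le\|x-y\|$, takes the minimum over $X\times Y$, and then shows the ambient offsets are disjoint by contradiction from $r<r^\ast(PX,PY)\le r^\ast(X,Y)$; your Pythagoras derivation of the contraction and your explicit triangle-inequality step spell out what the paper states in one line.
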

\begin{proof}
$P$ has operator norm $1$, so $\|Px-Py\|=\|P(x-y)\|\le\|x-y\|$ for all $x,y$; taking the minimum over $x\in X$, $y\in Y$ gives the inequality. If $r<r^\ast(PX,PY)$ then $r<r^\ast(X,Y)$, and $\U(X;r)\cap\U(Y;r)\neq\emptyset$ would force some $\|x-y\|<2r$, a contradiction.
\end{proof}

Projection can therefore only make classes look \emph{more} entangled, never more separated: every first-contact scale, and hence every guard decision of the form $\hat r^\ast>r$, transfers from the $d_0$-dimensional shadow to the full-dimensional representation; the permutation certificate itself is a statement about the projected clouds. The converse fails---apparent entanglement in the projection may be an artifact of it---which is one more reason the claims we certify are the separation claims (Section~\ref{sec:tests}), and why ``remains entangled'' is only ever a magnitude comparison here.

The \emph{entanglement} of classes $(c_1,\dots,c_k)$ at layer $\ell$ is a functional of the Intersection ECP of their class clouds,
\[
    E_\ell(c_1,\dots,c_k) \;=\; \Phi\big(\Dx(\cdot;\, X_1^{(\ell)},\dots,X_k^{(\ell)})\big), \qquad \Phi \in \Big\{\, \max_r|\Dx(r)|,\ \textstyle\int|\Dx(r)|\,dr \,\Big\}.
\]
Both functionals vanish for separated clouds and grow with entanglement. The $L^1$ bound of P4 controls the change of the integrated mass, which is the functional used for every certified claim below; it does not control the peak statistic \citep[\S3.5]{kawamura2026intersection}, which we report only descriptively. Raw values of $\Phi$ carry the layer's density, dimension and feature scale, so we normalize to the \emph{interaction quotient}
\[
    \widehat E_\ell \;=\; \frac{\Phi\big(\Dx(\cdot;\,X_1^{(\ell)},\dots,X_k^{(\ell)})\big)}{\E_{H_0}\big[\Phi\big]},
\]
where the denominator is the expected value of the same functional under the exchangeable null---estimated at no extra cost as the mean of the $B$ permutation replicates already computed for the test of Section~\ref{sec:tests}. The quotient is dimensionless, is centred at $1$ when the classes are exchangeable (numerator and denominator are then evaluations of one statistic on exchangeable splits), and tends to $0$ under separation; because numerator and denominator share the layer's density, dimension $d_0$, sample size and grid, it is comparable \emph{across} layers, epochs, and models, which raw $\Dx$ values are not. \emph{Disentanglement} is a \emph{decrease} of $\widehat E_\ell$ along depth ($\ell$ increasing) or training (epoch increasing). Throughout, the head statistic is the raw mass $\int|\Dx|\,dr$ and its quotient; the paired tests use a dimensionless form of the mass (Section~\ref{sec:paired}).

\subsection{Two one-sided tests, and the guarded certificate}\label{sec:tests}

Let $H_0$ be
that $X$ and $Y$ are i.i.d.\ from a common distribution. Under $H_0$ the labels
are exchangeable, so for a profile functional $T=\Phi(\Dx)$ and $B$ random
relabelings,
\[
  p_{\uparrow} = \frac{1+\#\{b: T^{\sigma_b}\ge T_{\mathrm{obs}}\}}{1+B},
  \qquad
  p_{\downarrow} = \frac{1+\#\{b: T^{\sigma_b}\le T_{\mathrm{obs}}\}}{1+B}
\]
are both exact at any sample size and any $B$ (Proposition~\ref{prop:exact}). The distinction matters and conflating
the two costs all power against one of them: $p_\uparrow$ detects \emph{excess}
interaction, while $p_\downarrow$ detects \emph{separation}---the overlap
topology is poorer than a random relabeling predicts. For class clouds the
permuted splits interpenetrate maximally, so a layer that separates its classes
drives $T_{\mathrm{obs}}$ into the \emph{lower} tail. It is $p_\downarrow$, not
$p_\uparrow$, that certifies disentanglement. The low-tail permutation test on this invariant was introduced for regime detection in dynamical systems by \citet{majhi2026regime}, where a transition is declared when the overlap of two time-series windows is unusually low; here the same test certifies the separation of class clouds, and the paired form of Section~\ref{sec:paired} is new. The right tail is retained as a directional sanity check but, empirically, never fires on labeled class clouds---no class split interpenetrates \emph{more} than a random relabeling (E1: $0/45$ MNIST pairs)---so claims below that a pair ``remains entangled'' are statements about the magnitude of its interaction quotient relative to other pairs and conditions, not about $p_\uparrow$.

The separation certificate must be guarded. On a configuration that is $\chi$-blind in the sense of Section~\ref{sec:blind}, the separation test does not merely lose power: it rejects with probability tending to one, certifying separation exactly where the clouds meet (Theorem~\ref{thm:false}). The repair is the first interaction scale $r^\ast$, which falls out of the same sweep at no cost and is a consistent estimate of the population first-contact scale (Lemma~\ref{lem:rstar}): the \emph{guarded certificate} withholds a separation claim whenever the profile is silent on the scales immediately beyond $\hat r^\ast$, where the sampled offsets provably meet (Corollary~\ref{prop:guard}). Every separation claim in this paper is
of the guarded kind, and Experiment~E4 reports how often the guard fired. Operationally, a pair is \emph{certified separated} in a cell when its $p_\downarrow$ survives Benjamini--Hochberg within that cell's family of $45$ pairs and the guard is silent, the guard being evaluated on the quarter of the grid immediately beyond $\hat r^\ast$. A sampled pair first meets in isolated contact points, a transient of a few grid points with $\Dx>0$, so the released code also implements a \emph{plateau} form of the guard: it waits for the profile's first return to zero after contact and fires only if that return comes within the quarter-window and the profile then stays at zero on at least $90\%$ of the following quarter, the signature of an annular overlap and not of a blob (Figure~\ref{fig:intro_blindspot}). Both forms were evaluated on the campaign (E4). The certificate says that the overlap topology of the pair is significantly poorer, over the whole scale range, than a random relabeling produces; it does \emph{not} say the offsets are disjoint. What remains of the interaction is what the quotient reports, and certified pairs carry quotients anywhere from $0.05$ to $0.9$.

\subsection{Comparative claims: the paired test}\label{sec:paired}

Rejecting $H_0$ is rarely the interesting
event here: distinct classes are essentially never exchangeable at \emph{any}
layer of a trained network, so the one-sided tests answer only the absolute
question. The claims that carry the application are comparative---\emph{layer
$\ell+1$ is more disentangled than layer $\ell$}, \emph{entanglement dropped
between epochs $e$ and $e'$}---and need a two-condition inference. We use a paired subsampled design (Appendix~\ref{app:paired}): split the inputs into $F$
disjoint folds, pass each fold through both conditions to obtain paired
replicates and their differences $D_s$, and calibrate by the \emph{sign-flip}
test, which is exact at finite $F$ for the symmetric null; its smallest attainable two-sided $p$-value is $2/2^{F}$, so it can reject at level $\alpha$ whenever $2^{F-1}\ge1/\alpha$ ($F\ge6$ at $\alpha=0.05$). We take $F=8$, so the floor is $2/256\approx0.008$. The sign of $\bar D$ is read off after the two-sided test. The fold statistic must be \emph{scale-free}. The raw profile mass $\int|\Dx|\,dr$ has units of feature length on the data-adaptive grid of Section~\ref{sec:setup}, so a paired test on it across epochs certifies changes of feature \emph{norm} as changes of interaction; we therefore pair the dimensionless mass $\int|\Dx|\,dr/r_{\max}$, with $r_{\max}$ the top of the grid (dividing by the pair's permutation-null mean at that checkpoint instead gives the same picture). Experiment~E3 reports what the raw statistic certified instead, as a cautionary contrast.

\subsection{Estimation and multiplicity}\label{sec:estimation}

At scales regular for both the population and the sample filtration, the sample profile equals the population one once the samples are $\epsilon$-dense, with a sample complexity of order $\epsilon^{-k}(\log(1/\epsilon)+\log(1/\delta))$ in the \emph{intrinsic} dimension $k$ of the class supports rather than the ambient or projected feature dimension (Proposition~\ref{prop:rate}, restated from the foundations paper). This is what licenses reading the
measured profile as a property of the representation rather than of the sample.

Across the layers $\times$ pairs $\times$ epochs sweep we control the false discovery rate by Benjamini--Hochberg at $q=0.05$, applied separately within each claim family; for the trajectory tests of E3 a family is the $45$ class pairs of one seed, layer, and epoch step. Those $45$ statistics share the fold split and the pooled clouds, so they are dependent; we assume they satisfy positive regression dependence on the true nulls, under which Benjamini--Hochberg controls the false discovery rate at $q$ \citep[Thm.~1.2]{benjamini2001control}, we do not prove it, and we check its cost with the dependence-agnostic correction \citep[Thm.~1.3]{benjamini2001control}: the dependence-agnostic Benjamini--Yekutieli correction has threshold $iq/(45H_{45})$ at rank $i$, with $H_{45}\approx4.40$, so at $F=8$ it can reject only when at least $31$ of the $45$ paired $p$-values sit at the floor $2/256$. Run over the E3 families it retains $254$ of the $2{,}707$ BH rejections, all of them at the first epoch step: the early-training conclusion of E3 survives the stricter correction, the late learning-rate wave does not, and we say so where each is reported. Procedures that exploit the structure BH ignores (the spectrum's floors form a descending chain, so their occupancy hypotheses admit fixed-sequence testing) need permutation replicates aligned across the family, which this campaign did not retain.

\subsection{Multi-class structure}\label{sec:multiclass}

Unlike total mixup, $E_\ell$ is defined for $k>2$, in three informative regimes: (i) the \emph{pairwise matrix} $E_\ell(c,c')$ over all class pairs, a representation-geometry analogue of a confusion matrix; (ii) the \emph{higher-order} terms $E_\ell(c,c',c'')$ and beyond, which detect class triples (or larger groups) that remain \emph{jointly} entangled when no pair is; and (iii) the \emph{interaction spectrum} \citep[Def.~2.19]{kawamura2026intersection}, the floors $\chi(C_j)$, $C_j(r)=\bigcup_{|S|=j}\bigcap_{i\in S}\U(X_i;r)$ (``at least $j$ classes meet''), of which $\Dx$ is the top floor, exactly computable from pooled Euler curves by Jordan's sieve \citep[Ch.~IV]{comtet1974advanced} applied to the Euler integral of \citet[Def.~2.19]{kawamura2026intersection}, $\chi(C_j)=\sum_{m\ge j}(-1)^{m-j}\binom{m-1}{j-1}\sum_{|S|=m}\Dx_S$ (the case $j=1$ is their Lemma~A.1). Each floor inherits the permutation tests and the interaction quotient verbatim, with the $k$-cloud null of Proposition~\ref{prop:exact}. Since every floor is \emph{linear} in the subset profiles, the spectrum adds no information beyond the subset family; its value is as a graded summary---the decay of $\widehat E(C_j)$ in $j$ is the quantitative form of pairwise dominance, and the onset delays $r^\ast(C_j)-r^\ast(C_2)$ record how much later deep multi-class overlap begins than first pairwise contact (Experiments~E2, E7). The second floor is also the natural \emph{single} summary of a layer: one number, one test, no multiplicity across pairs. On the five-class families of E7 it tracks the mean pairwise quotient of the same ten pairs at every layer of every model, a little above it because the union of overlaps is normalized by its own null (Appendix~\ref{app:e7}); we report the pairwise mean in the experiments because the questions asked there are about particular pairs. Algorithm~\ref{alg:disentangle} summarizes the protocol.

\begin{algorithm}[t]
\caption{Layerwise disentanglement profile}\label{alg:disentangle}
\KwIn{network $f$; classes $c_1,\dots,c_k$; $m$ samples/class; PCA dim $d_0$; functional $\Phi$}
\KwOut{$E_\ell$ and $p$-value for each layer $\ell$ (and class subset)}
\ForEach{layer $\ell$}{
  collect class clouds $X_i^{(\ell)}=\{f_\ell(x):\text{label}(x)=c_i\}$; PCA to $\R^{d_0}$\;
  compute $\Dx(\cdot)$ by Algorithm~1 of \citet{kawamura2026intersection} (Alpha-complex sweep, no reduction)\;
  $E_\ell \gets \Phi(\Dx)$\;
  permutation test: shuffle labels $B$ times, recompute $\Phi$, report $p$-value (P3)\;
}
\Return $\{(E_\ell, p_\ell)\}_\ell$ and the multi-class tensor $\{E_\ell(c_S)\}$\;
\end{algorithm}

\section{Experiments}\label{sec:experiments}
The campaign comprises $111$ trained networks ($104$ in the main suite: the five E3 seeds, the three E2 models and the $96$-model population of E5; plus a follow-up suite: two SimCLR encoders, a pretrained ViT-B/16 finetune, and a $4$-model learning-rate ablation), $1{,}170$ (network, layer, checkpoint) cells, and $52{,}650$ certified pairwise measurements---$42{,}570$ distinct clouds, since the stage-3 and penultimate rows of a ResNet are the same clouds measured twice---(the control arms of E6 and E9 add $17$ further trained networks, and the initialization and language-model cells of E9--E10 lie outside these counts), computed by the GUDHI implementation of the Alpha-complex sweep \citep{kawamura2026intersection} on CPU nodes with no persistence reduction anywhere. Protocol constants throughout: $m=200$ points/class, $d_0=5$ (PCA), $B=199$ permutations per test ($B=999$ for escalations), $F=8$ folds for paired tests, BH at $q=0.05$ within claim families. Datasets are MNIST \citep{lecun1998gradient} and CIFAR-10/100 \citep{krizhevsky2009learning}; architectures are MLPs, residual networks \citep{he2016deep}, vision transformers \citep{dosovitskiy2021image}, and, for the label-free regime, SimCLR encoders \citep{chen2020simple}. Training recipes: CIFAR ResNets use SGD with momentum $0.9$, learning rate $0.1$, batch size $128$, cosine decay over $80$ epochs, weight decay $5\cdot10^{-4}$ and random-crop/flip augmentation unless an experiment says otherwise (the E5 population crosses weight decay $\{0, 5\cdot10^{-4}\}$ with augmentation on/off); the from-scratch ViT uses AdamW at learning rate $10^{-3}$, weight decay $0.05$, $100$ epochs; the ViT-B/16 finetune AdamW at $10^{-4}$, weight decay $10^{-4}$, $8$ epochs, batch size $64$; SimCLR learning rate $0.5$, weight decay $10^{-4}$, $200$ epochs, batch size $256$; the random-label control weight decay $0$, no augmentation, $150$ epochs. E5's regressions are ridge with the penalty chosen by cross-validation over $10^{-3}$--$10^{3}$. Code, configuration files, every measurement record, and the scripts that regenerate each number and figure accompany the paper.

\begin{figure}[t]
\centering
\includegraphics[width=0.55\textwidth]{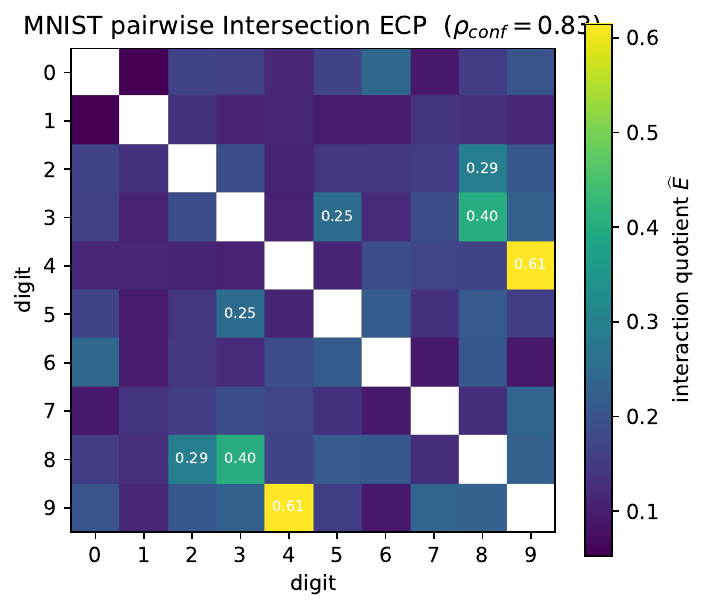}
\caption{Pairwise interaction quotients over the ten MNIST digits ($m=200$/class, $d_0=5$). All $45$ pairs are certified separated; the residual quotient ranks pairs by confusability (Spearman $\rho=0.83$ with a classifier's confusion matrix). Annotated cells are the most entangled pairs.}
\label{fig:mnist_heatmap}
\end{figure}

\begin{experiment}{E1 --- reproduce Wagner, at scale}
Correlate the pairwise matrix of Figure~\ref{fig:mnist_heatmap} $\widehat E(c,c')$ (interaction quotient, $\Phi=\int|\Dx|$) entrywise with the confusion matrix of a trained classifier (Spearman rank correlation over the 45 pairs), and verify the ranking is stable across $d_0\in\{3,4,5,6\}$, points per class $m\in\{50,100,200,400\}$, and 5 independent resamples ($d_0>6$ is out of reach for Alpha complexes: a 7-dimensional complex on $400$ points already exceeds $10^7$ simplices).

The representation is the raw $28\times28$ pixel vector in $[0,1]$ (no network), and the clouds are $m=200$ training images per class projected by a PCA fitted on the pooled pair; the confusion matrix is that of a multinomial logistic classifier on the top $50$ pixel principal components, evaluated on the MNIST test set and symmetrized. On the identical $d_0=5$ clouds we also compute four cheap separability statistics---the Fisher ratio $\|\mu_1-\mu_2\|^2$ over the mean within-class variance, the energy distance, the $5$-nearest-neighbour cross-class rate, and the cross-validated error of a logistic probe---and rank the pairs by each.

\emph{Result} ($m=200$, $d_0=5$, $B=199$). All $45$ pairs are certified separated at $p_\downarrow\le0.05$, and the residual interaction quotient ranks pairs by confusability: Spearman $\rho=0.83$ against the classifier's symmetrized confusion matrix ($p=2.4\times10^{-12}$). The cheap statistics rank it comparably: $5$-NN cross-class rate $\rho=0.86$, probe error $0.83$, energy distance $0.77$, Fisher ratio $0.74$; the quotient correlates $0.87$ with the neighbour rate itself. For ranking pairwise confusability, then, the topological statistic adds nothing over a nearest-neighbour count, and we do not claim otherwise; its value lies in the certificate, the $k$-fold terms and the scale resolution that the cheap statistics do not offer. The most entangled pairs are $(4,9)$ ($\widehat E=0.61$), $(3,8)$ ($0.40$), $(2,8)$ ($0.29$); the least, $(0,1)$ ($0.05$), $(6,9)$ and $(0,7)$ ($0.09$). The ranking is stable across the full protocol sweep (Spearman vs.\ reference: median $0.87$, minimum $0.73$ over $80$ sweep cells). No pair triggers $p_\uparrow\le0.05$, confirming that the certifiable direction on class data is separation (Section~\ref{sec:tests}). See Figure~\ref{fig:mnist_heatmap}.
\end{experiment}

\begin{experiment}{E2 --- multi-class disentanglement vs depth}
On CIFAR-10 (ResNet-56 and ViT-Tiny) and a 10-class subset of CIFAR-100 (ResNet-56), compute $\widehat E_\ell$ through depth at every residual stage / transformer block: the full pairwise matrix, and a scan over all $\binom{10}{3}=120$ class triples reporting the joint statistic $\widehat E_\ell(c,c',c'')$ against the maximum of its three pairwise restrictions (the $k=3$ inclusion--exclusion has $7$ Alpha-complex terms per evaluation, so the full-depth triple scan runs at $d_0=4$, with the deep layers rescanned at $d_0=5$; E8 covers the $d_0$ sensitivity of the pairwise results). One bookkeeping note applies to every ResNet cell in the paper: the penultimate representation is the globally pooled output of the last residual stage, so the ``stage~3'' and ``penultimate'' rows of a ResNet are the same point clouds measured twice, with independent permutation draws. Counts of measurements below include both rows; where a count is itself a claim (the dominance rate here) we report the deduplicated number.

\emph{Result: depth.} Disentanglement is certified layer-by-layer and is strongly depth-graded (Figure~\ref{fig:e2_depth}). The CIFAR-10 ResNet-56's mean pairwise quotient falls $0.72 \to 0.63 \to 0.57 \to 0.18$ from stem to stage~3 ($45/45$ pairs certified at the deep stages); on the CIFAR-100 subset the deep-layer drop is sharper still ($0.79 \to 0.06$). The quotient tracks linear-probe accuracy inversely at every layer. The ViT is a contrast of standard recipes as much as of architectures (it is trained with AdamW at weight decay $0.05$, the ResNets with SGD at $5\cdot10^{-4}$): it disentangles less at every depth, and \emph{non-monotonically}---its mean quotient falls to $0.31$ at block~6 and rises again to $0.39$ at the penultimate representation, with individual pairs remaining at quotients up to $0.88$; consistently, it is the weakest generalizer of the three models. The cheap statistics of E1 computed on the same clouds give the same depth ordering for the ResNets---on CIFAR-10 the $5$-NN cross-class rate falls $0.36\to0.03$ and the Fisher ratio rises $0.5\to10$ from stem to stage~3---and disagree with the quotient in exactly one place, this ViT's head, where they keep improving from block~6 to the penultimate representation ($0.13\to0.09$; $2.6\to4.2$) while the quotient rises $0.31\to0.39$: the overlap topology re-entangles even as nearest-neighbour purity and mean separation improve. The non-monotonicity is a \emph{from-scratch artifact}, not an architectural signature: finetuning a pretrained ViT-B/16 on the same data yields a monotone depth profile at every finetuning epoch, up to $\pm0.01$ permutation noise among the deepest blocks---mean quotient $0.68 \to 0.53 \to 0.14 \to 0.13 \to 0.13$ from block~3 to the penultimate representation at epoch~6, with $45/45$ pairs certified from block~9 down---and the profile is already monotone at epoch~0, i.e.\ before any finetuning. Sufficient (pre-)training data, not the attention architecture, determines whether a ViT disentangles monotonically.

\emph{Result: higher-order scan.} Across all $120$ triples at every layer of the three architectures ($d_0=4$; $1{,}560$ distinct triple--layer cells after removing the stage-3/penultimate duplicates), the joint quotient sits at or below the largest of its three pairwise quotients in $1{,}509$ cells ($96.7\%$), at an overall median ratio $\widehat E_{\mathrm{triple}}/\max_{\mathrm{pairs}}\widehat E$ of $0.63$ (Figure~\ref{fig:e2_triple}). The exceptions are reported in full: $44$ on the CIFAR-100-subset ResNet-56, spread over all four stages (ratios up to $1.11$ in the stem and $1.22$ in stage~3, on a model whose early-layer pairs sit near the null), and $7$ on the ViT---six in block~6, the block at which its depth profile turns, the largest at $1.41$, and one in block~2 at $1.04$; the CIFAR-10 ResNet-56 has none. A dimension-matched rescan of the deep layers at $d_0=5$ ($600$ distinct cells: the deep stage of each ResNet-56 and ViT blocks~6, 8 and the penultimate representation; also rerun at mid-training checkpoints of the E3 ResNet-20 with the same outcome) gives $597/600$ ($99.5\%$), median $0.47$, maximum $1.07$, the three exceedances all on the CIFAR-100 subset. The pattern that would falsify pairwise dominance outright---all three pairs certified separated with quotients $\le 0.15$ while the joint quotient is $\ge 0.2$ and at least double the pairwise maximum---returns \emph{zero} candidates at every layer, under both the screening and the escalated $B=999$ tests. Dominance carries a test against the null floor measured in E9: an unstructured family registers as dominated $66$ times in $80$, with per-seed median ratios of $0.85$--$1.03$, whereas the $13$ distinct cells of the full-depth scan have medians of $0.44$--$0.89$ and the five deep cells of the rescan $0.43$--$0.52$; a one-sided exact Mann--Whitney test of cell medians against the $20$ null-seed medians gives $p=3.3\times10^{-8}$ for the full-depth scan and $p=1.9\times10^{-5}$ for the deep rescan (every Mann--Whitney test in this paper is exact and one-sided, on cell medians over all $120$ triples). The regularity also survives the most plausible counterexample generator we could construct: two SimCLR encoders (ResNet-20, trained \emph{without labels}, hence free of the cross-entropy pressure toward neural collapse) develop strong certified pairwise separation in their deep layers---mean quotient $\approx0.30$ at the penultimate stage, $45/45$ pairs certified, remarkable for a label-free objective---yet their triple scans likewise return zero candidates: the largest joint-vs-pairwise excess in either seed is $+0.04$ (seed~0, stage~1) and $+0.03$ (seed~1, stem), both within permutation noise of the pairwise level and neither in a deep layer, and in the deep layers every triple sits below its pairwise maximum (largest excess $-0.07$ and $-0.08$). Where joint quotients are large (early layers), they match the pairwise levels---triples inherit their pairs' entanglement. Only a $k$-fold statistic could have asked the question; the answer is that for these trained classifiers the pairwise entanglement matrix carries almost all of the class-overlap structure the intersection invariant can see, with exceptions that are small, mostly in early and middle layers, and listed.

\emph{Pairwise sufficiency on a target.} Dominance is a statement about the invariant; the operational question is whether the joint term carries anything about the network's \emph{errors} that the pairs do not. A confusion matrix is a pairwise object, so the error mass inside a class triple $\{a,b,c\}$ is the sum of its three pairwise confusions on the test set, and we ask whether the triple quotient predicts it beyond the three pairwise quotients. Over the $18$ distinct layer cells with a triple scan (the three architectures at every layer and the dimension-matched deep rescan, a ResNet's penultimate row counted once with its stage~3, as above), a leave-one-out regression on the sorted pairwise quotients reaches $R^2=0.71$ in the deep stage of the CIFAR-10 ResNet-56 and $0.76$ at the ViT's penultimate layer. Adding the triple quotient does not improve that prediction: the leave-one-out $R^2$ changes by a median of $-0.004$, rises in only $4$ of the $18$ cells, and by at most $+0.04$ (one early ViT block). The triple quotient is not, however, unrelated to the target once the pairs are known. Its partial rank correlation reaches $p<0.05$ in $7$ of the $18$ cells, far more than chance, with a sign that follows depth: the five negative ones are all deep layers and the two positive ones early ($\rho$ from $-0.32$ to $+0.29$). That is a residual association too weak to register as out-of-sample gain, measured against a target that is itself a sum of pairwise confusions. Within this invariant the pairwise matrix is sufficient to \emph{predict} the error structure; what the deep-layer negative residual reflects we leave open.

\emph{Why dominance is the default, and why it is not a theorem.} By witness-region nesting, $\bigcap_{i\in T}\U(X_i;r)\subseteq\bigcap_{i\in S}\U(X_i;r)$ for $S\subseteq T$: the triple overlap lives \emph{inside} each pairwise overlap, so a triple can be active only at scales where all three pairwise overlaps are nonempty, and a triple entangled while its pairs test null would require \emph{sustained} Euler cancellation in all three---the failure mode the guard monitors, which occurred $0$ times in $52{,}650$ measurements (E4). Nesting also explains where the exceptions sit: at ratios of $1.0$--$1.4$, either in early layers whose pairwise quotients are near the null, where the triple's null is smallest relative to its signal, or in the deep stage of the CIFAR-100 subset, where the ratio divides one near-zero quotient by another. It does not make dominance a theorem, and we checked: $\chi$ is not monotone under inclusion, so cutting a pairwise overlap by a third ball union can \emph{create} components, and the permutation null of a triple is far smaller than a pair's, which inflates triple quotients. An adversarial search over anisotropic Gaussian triples in $\R^3$ ($n=30$ per class, $B=99$), constrained to the guard-certified regime, finds $\widehat E_{\mathrm{triple}}/\max_{\mathrm{pairs}}\widehat E=1.98$ at the optimum and a median of $1.24$ over $20$ fresh draws of the same geometry (above $1$ in $85\%$). Dominance is therefore a property of these representations, not of the geometry; and because an unstructured family is scored ``dominated'' $83\%$ of the time (E9), the distance of the ratio distribution from the null floor, not the rate, is the finding. Experiment~E9 asks whether training induces it and finds it already present at initialization, deepened by training on the true labels, and manufactured from an exchangeable start by random-label memorization. The converse lesson stands: genuine higher-order entanglement, if it exists in representations, is structurally invisible to \emph{intersection}-based invariants and must be sought in complement or linking invariants, whose complete $k$-cloud form \citet{kawamura2026intersection} leave open.
\end{experiment}

\begin{figure}[t]
\centering
\includegraphics[width=0.6\textwidth]{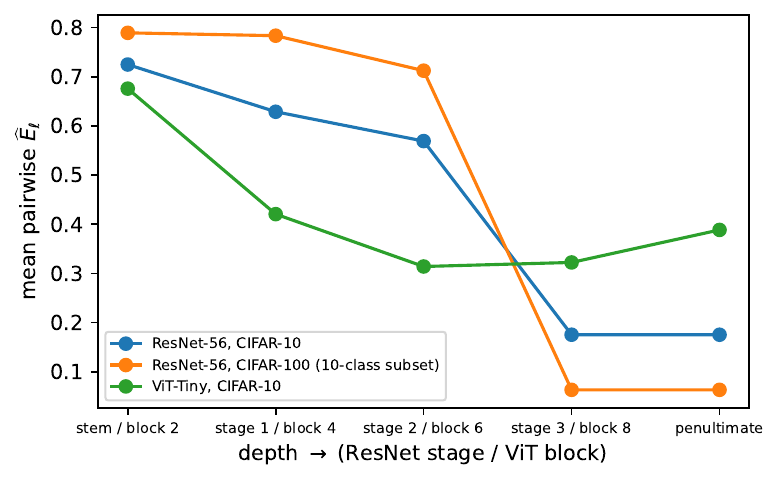}
\caption{Depth-wise disentanglement at the final epoch: mean pairwise quotient per stage for ResNet-56 on CIFAR-10 and on the CIFAR-100 subset, and for the ViT on CIFAR-10. The ResNets drop sharply at the deep stage (their penultimate point repeats stage~3); the ViT is weaker and non-monotone.}
\label{fig:e2_depth}
\end{figure}

\begin{figure}[t]
\centering
\includegraphics[width=0.95\textwidth]{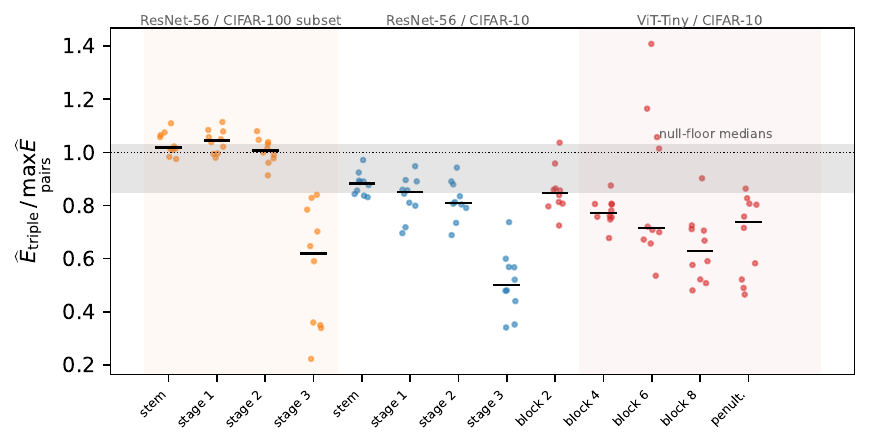}
\caption{The higher-order scan: $\widehat E_{\mathrm{triple}}/\max_{\mathrm{pairs}}\widehat E$ for all $120$ triples in every layer of the three architectures ($d_0=4$), one strip per distinct cell, bar at the median. Dotted line: ratio $1$; grey band: per-seed medians of the null floor (E9). Exceedances sit in all four stages of the CIFAR-100 ResNet, mostly the early ones, and in block~6 of the ViT.}
\label{fig:e2_triple}
\end{figure}

\begin{experiment}{E3 --- training-dynamics trajectories at scale}
Train ResNet-20 on CIFAR-10 for 5 seeds, checkpointing at epochs $0,1,2,4,8,16,32$ and every $10$ epochs from $40$ to $80$; track $\widehat E_\ell$ at every residual stage for every checkpoint, exploiting the no-reduction cost (P2) to a scale impractical for mixup barcodes. Report paired-test significance (Section~\ref{sec:paired}) for each epoch-to-epoch step.

\emph{Result.} Across $12{,}375$ paired epoch-to-epoch tests ($5$ seeds $\times$ $5$ stages $\times$ $45$ pairs $\times$ $11$ checkpoint steps) on the scale-free fold statistic of Section~\ref{sec:paired}, $2{,}707$ steps are BH-significant, $2{,}026$ of them decreases: training disentangles, and the claim is certified stepwise rather than read off a curve (Figure~\ref{fig:nn_training}). The significance sits at the start and in the deep stage: $687$ significant steps at epoch $0\to1$ and $1{,}728$ within the first eight epochs, and over the whole run the deep stage carries $873$ certified decreases against $42$ at the stem. The $681$ significant increases are of two kinds: at epoch $0\to1$ about a fifth of the pairs in every stage re-entangle as the random features are overwritten ($202$), and in the middle stages individual pairs re-entangle transiently between epochs $1$ and $4$ ($130$ and $80$) while the stage means keep falling. A smaller late wave exists and follows the learning-rate schedule, at BH level only: under the cosine schedule, $136$ certified decreases at epochs $50\to60$ and $98$ at $60\to70$, every one of them in the deep stage; under a step schedule with drops at epochs $40$ and $60$ ($2$ seeds, identical protocol) the wave moves to $40\to50$ ($166$ decreases, $160$ in the deep stage); under a constant learning rate no step after epoch $32$ carries more than $49$ decreases, and the two late steps carry $18$ and $32$ against $36$ and $16$ increases. Learning-rate decay, not training time, triggers a late reorganization of the deep stage, but the effect is an order of magnitude smaller than the initial separation and does not survive the Benjamini--Yekutieli correction (Section~\ref{sec:estimation}). Disentanglement is depth-graded exactly as in E2: over training, the mean quotient at the penultimate stage falls $0.78 \to 0.16$ while the stem barely moves ($0.76 \to 0.73$)---early layers never disentangle; deep layers do almost all the work.

\emph{What the raw statistic certified instead.} The same $12{,}375$ tests on the raw profile mass $\int|\Dx|\,dr$---the fold statistic of our first campaign---return $5{,}633$ significant steps, and $1{,}125$ of them, every significant step at epoch $0\to1$, are \emph{increases}, at the very step where every quotient falls. The permutation-null mean of the mass grows twentyfold at that step in the deep stage (the features spread out), and the raw mass follows the scale rather than the interaction. The late `wave' it certified sat in the \emph{stem}---$225$ of $225$ pair--seed tests at $50\to60$---exactly where the quotient is flat at $0.74$ and the feature norm is shrinking under weight decay. The quotient and the certificates of E1--E2, which divide by the same-grid null, were never affected; the paired comparisons were, and the statistic is dimensionless throughout this paper. We report the episode because it is the failure mode that any trajectory analysis on an unnormalized geometric statistic will meet. 

\emph{The same test on cheap statistics, same folds.} We ran the identical paired analysis on the Fisher ratio, the mean-distance ratio, the energy distance and the $5$-nearest-neighbour cross-class rate of E1 on the same fold clouds. The neighbour rate tells the ECP's story: $3{,}456$ significant steps, $2{,}933$ of them disentangling, $2{,}356$ within the first eight epochs, the cosine wave at $50\to60$ ($173$ decreases, $146$ in the deep stage) and the step-schedule wave at $40\to50$ ($202$, $170$ deep), and no localized late wave at constant learning rate. The late wave is therefore a property of the representation, not of the measure. The three mean-based statistics behave differently: they certify changes at almost every step in both directions ($9{,}448$ significant under the cosine schedule, $6{,}017$ disentangling and $3{,}431$ entangling; $468$ against $389$ even at $70\to80$, in every stage including the stem, where neither the quotient nor the neighbour rate moves), because a drift of the class means registers as a change whether or not the overlap changes. For timing \emph{when} classes separate, the overlap statistics---topological or nearest-neighbour---resolve it and the mean-based ones do not.

\emph{Result: cost, on identical clouds.} On the final-epoch representations of the $104$ main-suite networks we ran the mixup barcode---the authors' reference implementation, vendored unmodified---on exactly the clouds the ECP measures ($m=200$, $d_0=5$, both inclusion directions since the barcode is asymmetric, degrees $0$--$1$): $31.7$\,s per pair against $3.7$\,s for the full ECP profile (three Alpha-complex sweeps), a mean $8.5\times$ per layer (range $3.9$--$15.2\times$), $206$ vs.\ $24$ CPU-hours in total; the gap is the image-persistence matrix reduction against the reduction-free sweep, and it compounds with $B$ under permutation testing---which the barcode does not support. At the barcode's cost, the $12{,}375$ paired tests of this experiment alone would be infeasible.
\end{experiment}

\begin{figure}[t]
\centering
\includegraphics[width=0.95\textwidth]{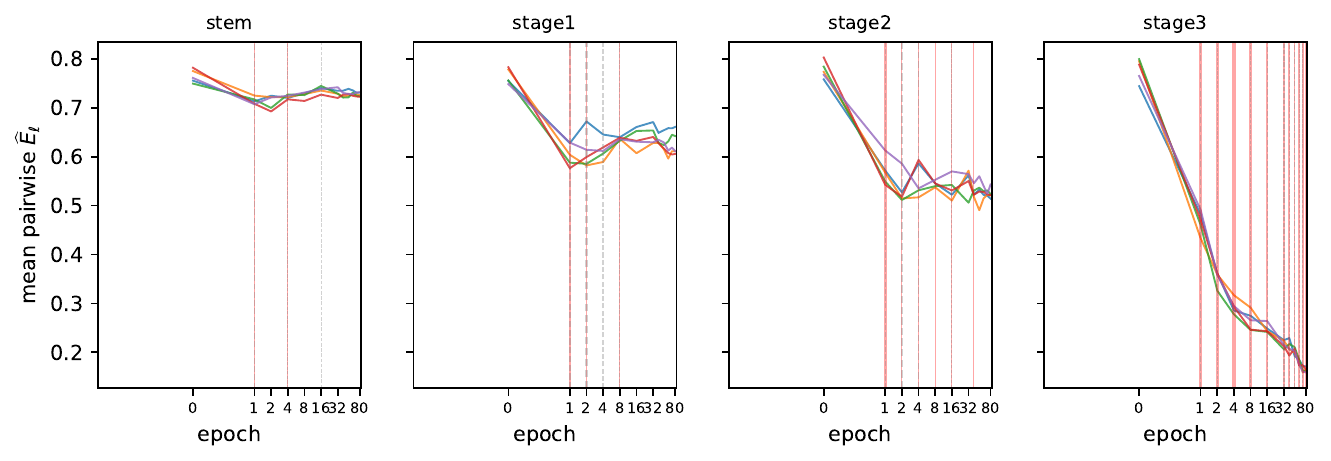}
\caption{Training dynamics, certified: mean pairwise quotient per stage for ResNet-20 on CIFAR-10, one curve per seed (stage~3 is the penultimate representation). Vertical marks are epochs receiving BH-significant paired steps on the scale-free statistic---red decreases, grey increases, weighted by count. Deep stages disentangle; the stem does not.}
\label{fig:nn_training}
\end{figure}

\begin{experiment}{E4 --- calibration, and the guard}
Apply both one-sided tests to every layer/pair and the paired test to every trajectory step; Verify Type-I calibration empirically: on label-shuffled (null) data, the rejection rate at level $\alpha$ must match $\alpha$ across $500$ null replicates, for both tails and for the sign-flip test. Report the cancellation-guard firing rate ($r^\ast$ vs.\ $\Dx$, Section~\ref{sec:tests}) across all measured clouds.

\emph{Result (calibration).} All three tests are calibrated. Over $500$ null replicates, empirical rejection rates at $\alpha\in\{0.01, 0.05, 0.10\}$: $p_\uparrow$ gives $(0.010, 0.044, 0.090)$, $p_\downarrow$ gives $(0.014, 0.064, 0.092)$, and the sign-flip paired test gives $(0.006, 0.052, 0.100)$---each within binomial error of nominal (Figure~\ref{fig:e4_calibration}).

\emph{Result (guard).} Across all $52{,}650$ pairwise measurements of the campaign ($38{,}475$ in the main suite, audited by this experiment's pipeline, plus $14{,}175$ in the follow-up suite, audited post hoc from the archived measurement records) the cancellation guard fired \emph{zero} times ($42{,}570$ distinct clouds, the stage-3 and penultimate rows of each ResNet being the same clouds measured twice). The guard monitors one failure pattern---silence of $\Dx$ on the quarter of the scale range immediately past first contact while $\hat r^\ast$ certifies that the offsets already meet---so a zero firing rate says that this pattern never occurred, not that Euler cancellation is impossible elsewhere in the range, and on pairs whose quotient is far from zero the pattern is unlikely to begin with. What the count establishes is narrower and still useful: no separation certificate in the campaign was issued on a pair whose profile was silent at the scales where the clouds provably interact, which is exactly the false certificate of Theorem~\ref{thm:false}, and the check costs nothing beyond the sweep already run. Because the campaign window begins at the contact scale, where a sampled annulus is still a set of contact points, we re-evaluated every cell whose features remained cached ($45{,}225$ of the $52{,}650$; the $34$ population, SimCLR and finetuned-ViT files whose cached features had been purged are covered by the archived window only) under three criteria: the campaign window, the same window shifted one grid point past contact, and the plateau form of Section~\ref{sec:tests}. None fired on any cell, and the recomputation reproduced every archived guard decision. The profiles say why: on class clouds the overlap is a set of blobs, and $\Dx$ first returns to zero a median of $78$ grid points after contact, on a live range of $181$; $1{,}797$ cells return to zero within the first quarter, and in none of them does the zero persist.
\end{experiment}

\begin{figure}[t]
\centering
\includegraphics[width=0.45\textwidth]{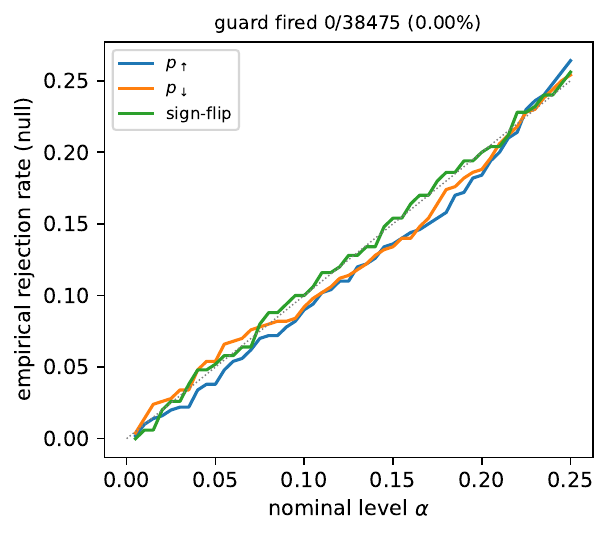}
\caption{Type-I calibration: empirical null rejection rate against the nominal level for both one-sided tests and the sign-flip test ($500$ null replicates). All three track the diagonal.}
\label{fig:e4_calibration}
\end{figure}

\begin{experiment}{E5 --- predicting generalization (exploratory)}
Train a population of $96$ CIFAR-10 models crossing depth $\{20,32,44,56\}$ $\times$ width $\{1\times,2\times\}$ $\times$ weight decay $\{0,5\!\cdot\!10^{-4}\}$ $\times$ augmentation $\{$on, off$\}$ $\times$ 3 seeds; regress held-out test accuracy on disentanglement features (final-epoch $\widehat E_\ell$ per stage, trajectory summaries, triple-entanglement counts) and report predictive $R^2$/rank correlation against total mixup, CKA, linear probes, and neural collapse.

\emph{Result (two-sided).} Over the $96$-model population the null-normalized quotient features are informative but weak (leave-one-out $R^2=0.34$, Spearman $\rho=0.65$ for test accuracy, and similarly for the generalization gap), while the baselines are near-ceiling ($R^2=0.99$, $\rho=0.99$), dominated by linear-probe accuracy, which is nearly the prediction target itself; total mixup---the authors' vendored reference implementation on identical clouds, direction-symmetrized, degrees $0$--$1$---predicts far better than the quotient ($R^2=0.84$, $\rho=0.91$). The explanation is normalization. The quotient divides out the density and the feature scale (the raw mass has units of feature length, Section~\ref{sec:paired}), and that scale is what a predictor of accuracy uses: the \emph{unnormalized} ECP features (per-layer mean and maximum profile mass $\int|\Dx|$ and mean first-interaction scale $r^\ast$, all from the same sweep) reach $R^2=0.94$, $\rho=0.96$ for test accuracy and $R^2=0.92$ for the gap, above total mixup (Table~\ref{tab:e5}). None of it survives the probes: adding either ECP family, or total mixup, to the baselines leaves them unchanged ($R^2=0.985$--$0.988$ against $0.987$). For raw prediction on this population, no topological interaction summary---certified or descriptive, ours or theirs---carries information beyond what trivially cheap probes already provide; the measurement's value lies in the certified structural claims of E1--E4 and E9--E10, not in generalization prediction.

\emph{Result: the quotient's departure from the cheap statistics.} Scoring each (model, layer) cell by how far the quotient departs from what the neighbour rate or the Fisher ratio of E1 predicts, and regressing test accuracy, generalization gap, expected calibration error, negative log-likelihood, and accuracy under four on-the-fly corruptions on those departures, alone and on top of the baselines, changes no leave-one-out $R^2$ by more than $0.03$. The departure predicts nothing the cheap statistics do not; the deep-stage neighbour rate is itself the strongest calibration predictor in the population (Spearman $0.94$ with calibration error, above probe accuracy), and corruption robustness, which clean accuracy barely predicts (Spearman $0.30$), is predicted weakly by mid-stage entanglement ($R^2=0.34$) whichever statistic measures it. The outcome sits comfortably in a literature that has repeatedly found complexity measures to correlate weakly or unstably with generalization once models are swept systematically \citep{jiang2020fantastic}; the topological entrant with the strongest claim measures the optimizer's trajectory rather than the representation \citep{birdal2021intrinsic}.
\end{experiment}

\begin{table}[t]
\centering
\begin{tabular}{llrrr}
\toprule
target & features & $n_{\mathrm{feat}}$ & LOO $R^2$ & Spearman \\
\midrule
test\_acc & ecp & 16 & 0.337 & 0.655 \\
test\_acc & baselines & 15 & 0.987 & 0.993 \\
test\_acc & ecp+baselines & 31 & 0.985 & 0.993 \\
test\_acc & ecp-raw & 15 & 0.935 & 0.963 \\
test\_acc & ecp+ecp-raw & 31 & 0.925 & 0.956 \\
test\_acc & baselines+ecp-raw & 30 & 0.988 & 0.994 \\
test\_acc & total-mixup & 10 & 0.841 & 0.908 \\
test\_acc & baselines+total-mixup & 25 & 0.987 & 0.993 \\
gen\_gap & ecp & 16 & 0.340 & 0.662 \\
gen\_gap & baselines & 15 & 0.981 & 0.990 \\
gen\_gap & ecp+baselines & 31 & 0.979 & 0.989 \\
gen\_gap & ecp-raw & 15 & 0.918 & 0.955 \\
gen\_gap & ecp+ecp-raw & 31 & 0.906 & 0.946 \\
gen\_gap & baselines+ecp-raw & 30 & 0.979 & 0.987 \\
gen\_gap & total-mixup & 10 & 0.818 & 0.899 \\
gen\_gap & baselines+total-mixup & 25 & 0.981 & 0.990 \\
\bottomrule
\end{tabular}

\caption{E5 exploratory generalization prediction over $96$ models: leave-one-out $R^2$ and Spearman correlation of ridge regression by feature family.}
\label{tab:e5}
\end{table}

\begin{experiment}{E6 --- steering disentanglement with a differentiable surrogate (summary; Appendix~\ref{app:e6})}
Can the measure be turned around to \emph{steer} training? Since $\Dx$ is integer-valued, we optimize a smooth surrogate: on a mini-batch, the mean over cross-class pairs $\mathcal{C}$ of a multi-scale Gaussian kernel of their feature distance,
\[
    \mathcal{L}_{\mathrm{dis}} \;=\; \frac{1}{|\mathcal{C}|}\!\!\sum_{(x,y)\in\mathcal{C}} \frac{1}{|\mathcal{S}|}\sum_{s\in\mathcal{S}} \exp\!\Big(\!-\frac{\|f(x)-f(y)\|^2}{2\,s\,\sigma^2}\Big),
\]
where $\sigma^2$ is the median \emph{within}-class squared distance in the batch, so that the \emph{value} of $\mathcal{L}_{\mathrm{dis}}$ is scale-invariant like $\Dx$ and cannot be reduced by inflating feature norms; invariance of the \emph{gradient} additionally requires $\sigma^2$ to stay in the computational graph. We train ResNet-20 on CIFAR-10 at the E3 recipe with $\mathcal{L}=\mathcal{L}_{\mathrm{CE}}+\lambda\,\mathcal{L}_{\mathrm{dis}}$ for $\lambda\in\{0,0.5,2,8\}$ and judge the surrogate by whether the \emph{certified} penultimate quotient, measured afterwards by the non-differentiable ECP, moves in $\lambda$. Both implementations of the surrogate---with the bandwidth $\sigma^2$ detached from the graph and with it inside---\emph{lower} the surrogate while \emph{raising} the certified entanglement, monotonically in $\lambda$ ($0.159\to0.344$ detached; $0.167\to0.488$ attached, at $\lambda\le2$), each for an identifiable reason: the detached surrogate is gamed by feature-norm inflation, the attached one by within-class dispersion. E6 is therefore a measurement result rather than a regularizer: a differentiable surrogate for the interaction quotient is not yet available, and the certificate is what exposes the failure. Appendix~\ref{app:e6} reports the full campaign, with accuracies and robustness.
\end{experiment}

\begin{experiment}{E7 --- the interaction spectrum (summary; Appendix~\ref{app:e7})}
On the final-epoch E2 models and ResNet-20 we compute the full spectrum of floors $\widehat E(C_j)$, $j=2,\dots,5$ (``at least $j$ classes meet''), the top floors $j\in\{8,9,10\}$ of all ten classes, and the onset vector $r^\ast_2\le\dots\le r^\ast_{10}$ per layer. The spectrum decays monotonically in $j$ in every layer of every model, at a depth-graded rate (mean step ratio $\approx0.88$ at the stem, $\approx0.61$ at the deepest stage); the top floors are negligible in the deepest ResNet stages (quotients $0.015$--$0.07$ against $0.44$--$0.73$ at the stem; $B=19$, so magnitudes rather than sharp certifications); and the onsets stretch with order, the ten-fold overlap beginning at twenty times the pairwise scale in the deep stage. This is the graded, scale-domain form of E2's dominance; Appendix~\ref{app:e7} gives the full spectra and onset curves.
\end{experiment}

\begin{experiment}{E8 --- robustness to the projection dimension (summary; Appendix~\ref{app:e8})}
Re-measuring the depth conclusions of E2 at $d_0\in\{3,4,6\}$ on all three E2 models and the pretrained ViT-B/16 leaves every qualitative conclusion unchanged: the depth grading, the ViT's non-monotone profile and the pretrained ViT's monotone one persist at every $d_0$, deep layers stay fully certified separated, and the pair ranking against the $d_0=5$ reference has median Spearman $0.91$, $0.96$, $0.97$ at $d_0=3,4,6$, the few lower values ($0.56$--$0.69$) occurring only at $d_0=3$ among pairs already indistinguishable from zero. Appendix~\ref{app:e8} gives the numbers.
\end{experiment}

\begin{experiment}{E9 --- is pairwise dominance present at initialization?}
Is dominance a property of PCA-projected class clouds of natural images that training merely preserves, or something training produces? We rebuild the initializations of the E3 ResNet-20 and the E2 ResNet-56 and ViT exactly as the training script does (seeded construction, before the first optimizer step), push the CIFAR-10 test set through them, and run the full pairwise matrix and the $\binom{10}{3}$ triple scan at the same $d_0$ on every stage boundary and on the raw normalized input ($m=200$, $B=49$ for both orders, so the ratio is not biased by unequal precision), scored exactly as in E2 and compared against the dimension-matched trained cells. As a control in the opposite direction we train the same ResNet-20 to \emph{memorize} a fixed random relabeling of CIFAR-10 \citep{zhang2017understanding} (no augmentation, no weight decay, $150$ epochs; train accuracy $0.9999$, chance $0.0995$ on the true test labels) and measure it on the training images with the labels it fitted, where the ten ``classes'' are exchangeable by construction until the network makes them otherwise.

\emph{Result: present at initialization, deepened by training.} At the initialized penultimate layer ($d_0=5$) every triple of every architecture is dominated: $\widehat E_{\mathrm{triple}} \le \max_{\mathrm{pairs}}\widehat E$ in $120/120$ for ResNet-20, ResNet-56 and the ViT alike, with median ratios $0.76$, $0.76$ and $0.78$, $90$th percentiles $0.87$--$0.89$, and maxima below $1$ ($0.95$--$0.98$). The same holds at every stage boundary of the ResNet-20 at $d_0=4$ ($120/120$ at the stem and both middle stages, medians $0.74$--$0.76$). The pairwise quotients are themselves already off the null before training (means $0.74$--$0.80$; $34$--$40$ of $45$ pairs certified separated), and the raw normalized pixels give the same pairwise level as the random ResNet-20 ($0.742$ against $0.744$) and the same dominance ($119/120$, median ratio $0.73$, one exceedance at $1.04$): CIFAR pixels carry the pairwise structure already, and a random network passes it through essentially unchanged. Training does not create the pairwise structure; it deepens it, the median ratio falling from $0.76$--$0.78$ at initialization to $0.47$ at the trained deep layers of E2.

\emph{The bias floor.} The dominance score compares one quotient against the maximum of three. Under exchangeable labels every quotient sits near $1$ with permutation noise, so the maximum is biased upward and the ratio below $1$, and an unstructured triple family will register as ``dominated'' much of the time. We measured that floor on two nulls at the same $B$, ten seeds of four clouds each: four clouds drawn from one isotropic Gaussian in $\R^5$ give median ratio $0.93$ ($33/40$ dominated, maximum $1.09$; per-seed medians $0.88$--$1.03$), and the initialized features with labels permuted give $0.97$ ($33/40$, maximum $1.05$; per-seed medians $0.85$--$1.01$). Every initialization cell and every trained deep cell sits well below this floor (exact one-sided Mann--Whitney test \citep{mann1947test} of cell medians against the $20$ null-seed medians: $p=3.3\times10^{-8}$ for the ten initialization cells, $p=1.9\times10^{-5}$ for the five trained deep cells of E2; the cells are layers of a few networks and the null family has four triples per seed, so we read these as descriptive comparisons of location rather than as independent replicates), so the dominance reported here and in E2 is signal rather than the artifact, and the null rate of $83\%$ is what a genuinely unstructured family produces, against $100\%$ at initialization and $99.5\%$ trained.

\emph{Result: memorization manufactures it, in the last stage only.} The random classes are exchangeable at initialization and remain so through the stem and the first two stages of the trained network: mean quotients $1.02$, $1.01$, $0.99$, with $0$, $1$ and $1$ of $45$ pairs certified separated---the false-positive rate at $\alpha=0.05$---and $p_\uparrow$ firing on $5$, $3$ and $0$. The last stage separates all of them: mean quotient $0.26$, $45/45$ pairs certified, and every one of the $120$ triples dominated at median ratio $0.34$ ($90$th percentile $0.46$, maximum $0.59$; pairs at $B=199$, triples screened at $B=49$ with the ten largest escalated to $B=999$; the pairs' larger $B$ only sharpens the denominator, which biases the ratio upward, so the comparison is conservative), stricter than the trained natural classes of E2 ($0.47$) and far below the null floor. Two things follow. Fitting arbitrary labels creates pairwise structure where none existed, so training can \emph{manufacture} the dominance as well as deepen it; and it does so in the deepest stage alone, every earlier layer staying exchangeable with respect to the memorized labels. The second is a certified form of the observation that memorization is concentrated in the later layers \citep{stephenson2021geometry}, made here with a test per layer rather than a capacity curve.

\emph{Reading.} Pairwise dominance is not a fact about learning natural classes and not forced by geometry: it is present in the pixels and at initialization, sharpened by training on the true labels, manufactured outright when the network memorizes arbitrary ones, and reproduced on frozen language models (E10). Within this invariant, whatever a network does to a labeled cloud family, it does mostly pairwise.
\end{experiment}

\begin{experiment}{E10 --- outside vision: two frozen language models}
Every representation above comes from vision: networks we trained or finetuned, their initializations, and raw pixels. Our concentration diagnostic found that language-model features, unlike deep vision features, keep a usable scale window at every depth (Appendix~\ref{app:concentration}), which makes them the one wide representation
on which the full certified protocol should run unmodified. We take
BERT-base \citep{devlin2019bert} and GPT-2 \citep{radford2019language}, frozen,
mean-pooled over real tokens, on ten classes of 20~Newsgroups (test split,
headers, footers and quotes stripped, posts under $20$ words dropped,
duplicate posts removed; $3{,}358$ and $3{,}363$ documents, at least $279$ per
class), and run the E9 protocol: the $45$-pair matrix and all $120$ triples
at $d_0=5$, $m=200$, $B=49$, on the final layer of each and on BERT's middle
block; and repeat it on a second corpus, ten DBpedia ontology classes ($4{,}000$ Wikipedia abstracts, $400$ per class).

\emph{Result.} The pipeline runs end to end on both, and the same pairwise dominance holds in
both (Table~\ref{tab:lm}).
\begin{table}[h]
\centering\small
\caption{E10: two frozen language models on two corpora ($d_0=5$, $m=200$, $B=49$). ``Pairs certified'' is the number of the $45$ class pairs certified separated; ``dominated'' the number of the $120$ triples whose joint quotient is at or below their largest pairwise quotient; ``median ratio'' the median of $\widehat E_{\mathrm{triple}}/\max_{\mathrm{pairs}}\widehat E$ (null floor: per-seed medians $0.85$--$1.03$).}
\label{tab:lm}
\begin{tabular}{@{}lllcccc@{}}
\toprule
model & corpus & layer & pairs certified & mean pair $\widehat E$ & dominated & median ratio\\
\midrule
BERT-base & 20 Newsgroups & block 6 of 12 & $43/45$ & $0.55$ & $120/120$ & $0.57$\\
BERT-base & 20 Newsgroups & final & $43/45$ & $0.46$ & $120/120$ & $0.46$\\
GPT-2 & 20 Newsgroups & final & $33/45$ & $0.73$ & $118/120$ & $0.68$\\
BERT-base & DBpedia & block 6 of 12 & $45/45$ & $0.15$ & $120/120$ & $0.36$\\
BERT-base & DBpedia & final & $45/45$ & $0.16$ & $120/120$ & $0.42$\\
GPT-2 & DBpedia & final & $44/45$ & $0.46$ & $120/120$ & $0.39$\\
\bottomrule
\end{tabular}
\end{table}
BERT's final layer lands on the trained deep-vision median ratio of E2 ($0.46$ against $0.47$) on an encoder that never saw these labels, and its middle block is less separated and less strictly pairwise than its final layer---the depth gradient of E2 in a language model. GPT-2 separates the newsgroup topics far less (its mean-pooled final states have an effective rank below $6$, Table~\ref{tab:concentration}), yet its triples still sit well below the null floor of E9 (median $0.68$ against null-seed medians of $0.85$--$1.03$), its two exceedances at $1.01$ and $1.02$. Across the six language-model cells every median sits below every null-seed median (exact Mann--Whitney $p=4.3\times10^{-6}$). On DBpedia, BERT certifies every pair at both depths with mean quotients of $0.15$--$0.16$ and dominates every triple at median ratios of $0.36$ and $0.42$, and even GPT-2 certifies $44$ of $45$ pairs and dominates every triple (median $0.39$). Two models on two corpora is what we claim, with every cell landing on the same pairwise structure and the amount of separation tracking how much class signal the representation carries.
\end{experiment}

\begin{experiment}{E11 --- what changes certified disentanglement: depth, width, weight decay, augmentation}
The E5 population is a factorial design: ResNet depth $\{20,32,44,56\}$ $\times$ width $\{1,2\}$ $\times$ weight decay $\{0, 5\cdot10^{-4}\}$ $\times$ augmentation $\{\text{off},\text{on}\}$ $\times$ three seeds, all measured at every stage of the final epoch. Two models that differ in exactly one factor share the same test images in the same folds, so the paired test of Section~\ref{sec:paired} applies to a matched pair of \emph{models} exactly as it does to two checkpoints. For each factor contrast we run two tests. On the scale-free fold mass: within every matched pair and stage, the sign-flip test per class pair, BH within the $45$-pair family (\emph{mass} in Table~\ref{tab:factorial}; $24$ or $48$ matched pairs, $1{,}080$ or $2{,}160$ tests per stage). On the interaction quotient: for each class pair and stage, the $24$ or $48$ matched-model differences of $\widehat E$ are exchangeable in sign under the factor's null within each matched pair, so a sign-flip test over them (Monte Carlo, $19{,}999$ flips, with the $+1$ convention of Proposition~\ref{prop:exact}) with BH within the $45$-pair family certifies whether the factor moves the quotient of that pair (\emph{quotient} in the table). The first test asks whether the overlap got smaller; the second whether it got smaller \emph{relative to chance}.

\begin{table}[t]
\centering\footnotesize
\setlength{\tabcolsep}{3pt}
\begin{tabular}{@{}lrrrrrrrrrr@{}}
\toprule
contrast & pairs & $\Delta$acc & \multicolumn{2}{c}{stem} & \multicolumn{2}{c}{stage 1} & \multicolumn{2}{c}{stage 2} & \multicolumn{2}{c}{stage 3} \\
 & & & mass $\downarrow/\uparrow$ & $\widehat E$ $\downarrow/\uparrow$ & mass $\downarrow/\uparrow$ & $\widehat E$ $\downarrow/\uparrow$ & mass $\downarrow/\uparrow$ & $\widehat E$ $\downarrow/\uparrow$ & mass $\downarrow/\uparrow$ & $\widehat E$ $\downarrow/\uparrow$ \\
\midrule
depth 20$\to$32 & 24 & +0.005 & 0/0 & 0/0 & 38/39 & 0/0 & 0/18 & 0/2 & 25/16 & 0/0 \\
depth 32$\to$44 & 24 & -0.003 & 0/18 & 0/0 & 72/57 & 0/0 & 11/146 & 0/12 & 1/30 & 0/0 \\
depth 44$\to$56 & 24 & -0.002 & 10/21 & 0/0 & 52/71 & 0/0 & 106/54 & 0/0 & 26/25 & 0/0 \\
depth 20$\to$56 & 24 & -0.000 & 15/29 & 0/3 & 88/53 & 0/0 & 11/134 & 0/25 & 39/36 & 0/11 \\
width 1$\to$2 & 48 & +0.014 & 2/126 & 0/0 & 302/19 & 0/0 & 114/89 & 0/6 & 297/26 & 1/0 \\
weight decay 0.0$\to$0.0005 & 48 & +0.033 & 133/2 & 3/11 & 40/344 & 0/0 & 60/119 & 9/6 & 1743/12 & 2/32 \\
augmentation False$\to$True & 48 & +0.055 & 19/5 & 0/2 & 127/263 & 11/2 & 110/162 & 24/1 & 1307/46 & 44/0 \\
\bottomrule
\end{tabular}

\caption{E11. For each factor contrast (low $\to$ high) and stage: certified decreases$/$increases of the scale-free overlap mass among the $1{,}080$ or $2{,}160$ paired tests (\emph{mass}), and the number of the $45$ class pairs whose interaction quotient is certified to decrease$/$increase across the matched models ($\widehat E$); the mean quotient changes are in the text. $\Delta$acc is the mean change in test accuracy across the matched pairs.}
\label{tab:factorial}
\end{table}

\emph{Result: augmentation is the factor that disentangles.} Turning augmentation on lowers the deep-stage quotient of $44$ of the $45$ class pairs (mean $0.230\to0.171$), of $24$ pairs at stage~2 and $11$ at stage~1, and of none at the stem; the overlap mass falls in $1{,}307$ of $2{,}160$ paired tests at stage~3. It is also the largest accuracy gain in the design ($+0.055$).

\emph{Result: weight decay compresses without separating.} Weight decay produces the largest mass effect in the table---the stage-3 overlap mass falls in $1{,}743$ of $2{,}160$ tests---and yet the quotient \emph{rises} for $32$ of $45$ pairs ($+0.02$): the relabeled null shrinks as much as the overlap does. Weight decay simplifies the geometry of the whole representation, overlap included, and leaves the classes no more separated relative to chance; it raises accuracy by $0.033$ all the same. Relative topological separation and accuracy are different quantities, and this is the contrast that shows it.

\emph{Result: depth and width do not disentangle.} From depth $20$ to $56$ the quotient rises for $25$ pairs at stage~2 and $11$ at stage~3, by $0.02$--$0.04$, with no change in accuracy; the adjacent contrasts are null except $32\to44$ at stage~2 ($12$ pairs). The depth grading of E2 is a property of a stage's position in the network, not of how many stages there are. Doubling the width moves the quotient of at most $6$ pairs at any stage, while shrinking the mass at stages~1 and~3 ($302$ and $297$ of $2{,}160$). At the stem no factor lowers the quotient of more than $3$ pairs, and only weight decay raises it, for $11$.

\emph{Reading.} The measure separates two things regularization does. Augmentation pulls the classes apart relative to what a random relabeling of the same clouds would show; weight decay and width simplify the clouds and their overlap together, which the mass sees and the quotient, correctly, does not. And in a residual network the amount of separation a stage achieves is set by where the stage sits, not by the depth of the network around it.
\end{experiment}

\paragraph{Code and records.} The measurement pipeline, the cluster job scripts, and every measurement record behind the numbers above (893 files) are released \coderelease; checkpoints and cached features regenerate from the seeded configurations. Every figure and table rebuilds from the records alone.

\section{Conclusion and limitations}\label{sec:conclusion}\label{sec:limitations}

The Intersection ECP turns class disentanglement into a symmetric, multi-class, cheap, and statistically certified measurement: guarded one-sided tests certify separation, the paired test supports the comparative claims that trajectory analyses actually make, and the interaction quotient makes the numbers comparable across layers, epochs, and models. The methodological result travels furthest: a paired test on a statistic with units certifies the units. On the raw profile mass ours returned $5{,}633$ significant steps, more than twice the scale-free count, with every significant step of the first epoch pointing the wrong way and the late ``wave'' it certified sitting in the stem, where the quotient is flat and the feature norm is shrinking under weight decay. The remedy is one division, and the diagnosis---compare the certified direction against the permutation-null mean of the same statistic---costs nothing and applies to any geometric trajectory claim. The campaign's headline is structural: \emph{in trained image classifiers, class disentanglement is a certified, depth-graded, early-training phenomenon, and its interaction structure is pairwise almost everywhere we looked}. It is confusability-ranked at the input side ($\rho=0.83$ with confusion, on par with a nearest-neighbour rate), concentrated in deep layers ($0.78\to0.16$ at the penultimate stage against $0.76\to0.73$ at the stem), and significant mostly within the first eight epochs, with a smaller, BH-level late wave in the deep stage that follows the learning-rate schedule; it is weaker and non-monotone in a from-scratch ViT, a data-starvation artifact that pretraining removes. 

Joint entanglement of a triple sits below its strongest pair in $97\%$ of triple--layer cells and $99.5\%$ of deep cells, at median ratios far below the null floor, with the interaction spectrum decaying in order (E7) and the exceptions listed (E2); the structure is already present at initialization, deepened by training on the true labels, manufactured outright in the last stage alone when a network memorizes random ones (E9), and reproduced on two frozen language models (E10). Of the design choices in a $96$-model population, augmentation is the one that separates classes relative to chance, weight decay compresses the overlap without separating, and depth and width do nothing (E11); the unnormalized mass predicts generalization, though not beyond linear probes (E5). Every certified clause of that summary carries a $p$-value or a test against a measured null floor, and the graded summaries (the spectrum at $B=19$, the language-model dominance rows) are reported as magnitudes, which is the point: the topology of representations can be measured with the same statistical hygiene as any other quantity in the empirical sciences---provided the statistic is scale-free, as E3 shows the hard way. The open theory is an analytic null for the profile, which would calibrate the tests without permutations, and a hypothesis on labeled clouds under which pairwise dominance becomes a theorem; the guard-certified regime alone does not suffice (E2).

Six limitations bound these claims.

\emph{Dimensionality.} The Alpha complex costs $O(n^{\lceil d/2\rceil})$, so every measurement is made on a PCA shadow of dimension $d_0\le6$, as in \citet{wagner2024mixup}. Proposition~\ref{prop:projection} carries every first-contact scale and guard decision to the ambient representation (the permutation certificates are statements about the shadow), E8 finds every depth conclusion invariant to $d_0\in\{3,4,6\}$, and entanglement enters our claims only as a magnitude comparison, since an overlap in the shadow may be an artifact of it. Two caveats remain. Whether the class supports are well approximated in the projection, which the estimation guarantee assumes, we do not verify; and the triple scan of E2 could in principle miss an ambient triple whose constituent pairs the projection renders apparently entangled, although its mechanism, witness-region nesting, is dimension-free.

\emph{Reach.} The certificate is valid at any width, but its reach degrades in wide representations of high effective rank. On frozen ImageNet features the \emph{scale window}---the fractional gap between the first-percentile cross-class distance and the closest cross pair---narrows along the depth axis of every vision backbone tested ($3.4$--$3.6\times$ from ResNet-50 stage~1 to stage~4), a projected certificate retains only $8$--$26\%$ of the ambient interaction scale, and projection, ambient $\Dx$, and an ambient branch-and-bound for the onset vector all fail on the narrowest layers for one reason, concentration of measure (Appendix~\ref{app:concentration}). Across $55$ vision layer--model cells the window tracks effective rank (Spearman $-0.71$) rather than depth or width as such; language-model features show no such gradient, which is why the full protocol runs unmodified in E10. The claims of this paper therefore concern small trained vision classifiers, their initializations, and two frozen language models. The window is cheap and should be computed before the measurement is interpreted on anything wider.

\emph{Hypotheses of the estimation guarantee.} Proposition~\ref{prop:rate} assumes class supports of positive reach and an $\epsilon$-dense sample of each; class clouds of a trained network are not known to satisfy the first, and we do not verify the second in the projection. The guarantee licenses reading the sample profile as a population property only under those assumptions; the certificates and the paired comparisons, which are exact for the sample, do not depend on them.

\emph{What the cheap alternatives already give.} Where the question is a pairwise ranking or the timing of separation, a nearest-neighbour cross-class rate on the same clouds does as well as the quotient (E1, E3), and the quotient's departures from the cheap statistics predict nothing about accuracy, calibration, or robustness on the E5 population (E5). The case for the topological statistic rests on what those alternatives do not offer---the certificate with a null, the $k$-fold terms, scale resolution---and on the one depth profile where they disagree, the from-scratch ViT's head (E2).

\emph{Sampling and scale.} Results depend on the points per class $m$ and the scale range; stability (P4) bounds the effect in theory, and the E1 sweep over $m\in\{50,100,200,400\}$ and $d_0$ leaves the confusability ranking stable (Spearman median $0.87$, minimum $0.73$ against the reference). Radii are equal across clouds, the one slice a single sweep yields, so a compact class and a diffuse one are thickened alike; a density-offset variant, in which each cloud's balls carry a fixed head start equal to its excess median nearest-neighbour spacing (a weighted Alpha complex on the pooled points, with the offsets recomputed under every relabeling so the test stays exact), moves no E1 quotient by more than $0.04$, ranks the pairs as the diagonal does (Spearman $0.97$), and certifies the same $45/45$. We did not vary the resolution of the $200$-point grid; the integrated functional is insensitive to it in a way the peak statistic is not.

\emph{What $\chi$ discards, and what $\Dx$ cannot steer.} Structurally cancelling overlaps are invisible to $\Dx$ (Section~\ref{sec:blind}); neither the campaign guard nor its plateau form fired on any of the $52{,}650$ measured pairs (E4), so the escalation path was never needed here, but the blind spot is real and the guard is not optional. $\Dx$ is integer-valued, and both differentiable surrogates we tried were optimized away from what the certificate measures (E6), so training-time control remains open. A paired test of the profile across a grokking transition, finally, found the right sign at a magnitude we do not consider a finding (Appendix~\ref{app:grok}).

\appendix
\section{The inference used, with sources}\label{app:inference}

This appendix states the guarantees invoked in Section~\ref{sec:certified} with their sources. The tests are textbook and are cited rather than re-proved; the recovery statement is restated from the foundations paper; what is proved here is what is specific to the Intersection ECP, the blind spot of Theorem~\ref{thm:false} and the consistency of the guard. Throughout, $X=\{x_1,\dots,x_m\}$ and $Y=\{y_1,\dots,y_n\}$ are finite clouds in $\R^d$, $T=\Phi(\Dx(\cdot;X,Y))$ for a functional $\Phi$ of the profile, and $M_r=\U(X;r)\cap\U(Y;r)$. We write $r^\ast(A,B)=\inf\{r:\U(A;r)\cap\U(B;r)\neq\emptyset\}$ for any two sets; for finite clouds and closed balls the infimum is attained and $M_r\neq\emptyset$ iff $r\ge r^\ast$.

\subsection{Exact tests in both directions}\label{app:exact}

\begin{proposition}[Exactness of the two one-sided tests]\label{prop:exact}
Let $H_0$ be that the labels of the pooled sample $Z=X\cup Y$ are exchangeable (in particular, that $X$ and $Y$ are i.i.d.\ from a common distribution). Let $\sigma_1,\dots,\sigma_B$ be drawn uniformly and independently from the splits of $Z$ into parts of sizes $m,n$, independently of the data. Then for every statistic $T$, every $\alpha\in(0,1)$, every $m,n$ and every $B$, the $p$-values of Section~\ref{sec:tests} satisfy $\Pr_{H_0}[p_\uparrow\le\alpha]\le\alpha$ and $\Pr_{H_0}[p_\downarrow\le\alpha]\le\alpha$. The same holds for $k$ clouds with splits into parts of sizes $m_1,\dots,m_k$.
\end{proposition}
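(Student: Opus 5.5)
The plan is the standard Monte Carlo permutation argument: reduce to exchangeability of the $B+1$ statistic values, then use a rank argument. Write $T_0=T_{\mathrm{obs}}=T(\sigma_0)$, where $\sigma_0$ is the observed labelling, and $T_b=T(\sigma_b)$ for $b=1,\dots,B$. Here $T(\sigma)$ means the profile functional $\Phi(\Dx)$ evaluated on the clouds that split $\sigma$ carves out of the pooled sample $Z$.

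The first step is to show that $(T_0,T_1,\dots,T_B)$ is an exchangeable vector under $H_0$. Condition on the unordered pooled multiset $Z$. By label exchangeability, the observed split $\sigma_0$ is, conditionally on $Z$, uniform over the $\binom{m+n}{m}$ splits (or the multinomial number of splits for $k$ clouds). The $\sigma_b$ are drawn i.i.d.\ uniformly from the same set, independently of the data. Hence $\sigma_0,\dots,\sigma_B$ are i.i.d.\ uniform given $Z$, and so the $T_b=T(\sigma_b;Z)$ are conditionally i.i.d. In particular they are exchangeable.

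Two points need care here. The first is that $T$ must be a function of the split and of $Z$ only. This requires the PCA projection and the scale grid to depend on the pooled, unlabelled cloud, which is exactly what Section~\ref{sec:setup} arranges, so I will state it as a standing assumption of the proof. The second is that a split should be identified with the labelled partition, so that the symmetry of $\Dx$ (P1) plays no role.

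The second step is the rank bound for $p_\uparrow$. With exchangeable entries and ties allowed, I will show $\Pr\bigl[\#\{b\ge1:T_b\ge T_0\}\le j\bigr]\le (j+1)/(B+1)$ for each $j$. The tidiest route avoids tie-breaking. Let $N=\#\{b\in\{0,\dots,B\}:T_b\ge T_0\}$, which includes index $0$, so $p_\uparrow=N/(B+1)$. For each index $i$ define $N_i=\#\{b:T_b\ge T_i\}$. Deterministically, at most $\lfloor\alpha(B+1)\rfloor$ indices $i$ can satisfy $N_i\le\alpha(B+1)$. The reason is that if $K$ indices satisfy it, the one with the smallest $T_i$ among them has $N_i\ge K$. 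By exchangeability, $\Pr[N_0\le\alpha(B+1)]$ equals the average of $\Pr[N_i\le \alpha(B+1)]$ over $i$, which is at most $\lfloor\alpha(B+1)\rfloor/(B+1)\le\alpha$.

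The case $p_\downarrow$ follows by applying the same argument to $-T$. The $k$-cloud case is identical, with multinomial splits. Nothing in the argument depends on $m$, $n$, $B$ or the form of $\Phi$, which gives the claimed uniformity.

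The main obstacle is the counting lemma under ties. $T$ is a functional of an integer-valued step profile on a finite grid, so ties genuinely occur and the usual continuous-rank argument does not apply directly. The "smallest $T_i$ has $N_i\ge K$" argument handles this. The one thing to check is that it uses $\ge$ in $N_i$ consistently with the definition of $p_\uparrow$.

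A secondary subtlety is the phrase "i.i.d.\ from a common distribution". This implies exchangeability of the labels given $Z$ even when $Z$ has repeated points, because the split is then uniform over labelled index sets. I will phrase the argument in terms of index sets, not point sets, to avoid that issue.
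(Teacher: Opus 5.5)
Your proposal is correct and follows the paper's route. Conditional on the pooled sample $Z$ (with the PCA projection and grid depending on $Z$ alone), the $B+1$ values $T_{\mathrm{obs}},T^{\sigma_1},\dots,T^{\sigma_B}$ are exchangeable, so the $+1$ rank $p$-value is valid at every $B$, with ties absorbed by the weak inequalities and $p_\downarrow$ obtained from $-T$; the only difference is that you prove the tie-robust rank bound directly by your counting argument, where the paper cites Dwass, Phipson--Smyth and Lehmann--Romano.
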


This is the Monte Carlo permutation test: conditionally on $Z$, the values $T_{\mathrm{obs}},T^{\sigma_1},\dots,T^{\sigma_B}$ are exchangeable, and the rank $p$-value with the $+1$ in numerator and denominator is valid at every $B$ \citep{dwass1957modified,phipson2010permutation}; the full-group version is the randomization test of \citet[Thm.~15.2.1]{lehmann2005testing}. Ties are handled by the weak inequalities in the definitions. The PCA projection and the scale grid of Section~\ref{sec:setup} are functions of $Z$ alone, so the statement applies verbatim to the projected clouds on their grid.

\subsection{The paired test}\label{app:paired}

Let two conditions (two layers, two epochs, two models) be evaluated on $F$ disjoint folds of the same inputs, and let $D_s=T_1^{(s)}-T_2^{(s)}$ be the difference of their statistics on fold $s$. The comparative null $H_0^{\mathrm{sym}}$ is that the law of $(D_1,\dots,D_F)$ is invariant under coordinatewise sign changes; it holds when the two conditions are exchangeable within each fold given the shared preprocessing, which is the natural null of a paired design. Independence across folds is not required.

\begin{proposition}[Exactness of the sign-flip test]\label{prop:signflip}
Under $H_0^{\mathrm{sym}}$ the two-sided $p$-value $p=2^{-F}\,\#\{\varepsilon\in\{\pm1\}^F:\ |F^{-1}\sum_s\varepsilon_sD_s|\ge|\bar D|\}$ satisfies $\Pr[p\le\alpha]\le\alpha$ for every $F$. Since $\varepsilon$ and $-\varepsilon$ give the same value, $p\ge2/2^F$, so the test can reject at level $\alpha$ only if $2^{F-1}\ge1/\alpha$.
\end{proposition}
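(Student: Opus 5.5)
The plan is the standard group-invariance argument for randomization tests, specialized to the sign-flip group $G=\{\pm1\}^F$ acting coordinatewise on $D=(D_1,\dots,D_F)$. Write $S(D)=|F^{-1}\sum_sD_s|=|\bar D|$ and, for $\varepsilon\in G$, $\varepsilon D=(\varepsilon_1D_1,\dots,\varepsilon_FD_F)$. The two-sided $p$-value of the statement is then
\[
p(D)=2^{-F}\#\{\varepsilon\in G:\ S(\varepsilon D)\ge S(D)\}.
\]
The first step is to observe that $p(\eta D)=p(D)$ for every $\eta\in G$. This holds because $\varepsilon\mapsto\varepsilon\eta$ is a bijection of $G$, so the multiset $\{S(\varepsilon\eta D)\}_\varepsilon$ equals $\{S(\varepsilon D)\}_\varepsilon$. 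Only the comparison value $S(\eta D)$ changes. Consequently $p(\eta D)$ is the normalized upper rank of $S(\eta D)$ within the fixed orbit multiset $\mathcal O(D)=\{S(\varepsilon D):\varepsilon\in G\}$.

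The second step is the counting lemma, carried out conditionally on the orbit. Fix any vector $d$ and let $\eta$ be uniform on $G$. Then $\Pr_\eta[p(\eta d)\le\alpha]\le\alpha$. To see this, sort the $2^F$ orbit values in decreasing order with ties grouped. Suppose the value $S(\eta d)$ has upper rank $j$, meaning exactly $j$ elements of the orbit are $\ge S(\eta d)$; then $p(\eta d)=j/2^F$. The event $p\le\alpha$ forces $j\le\alpha2^F$. Moreover, the number of $\eta$ whose value has upper rank at most $\alpha 2^F$ is itself at most $\alpha2^F$, because ties only push ranks up. This gives the bound with weak inequalities handling ties, exactly as in Proposition~\ref{prop:exact}.

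The third step transfers this to the data distribution. Under $H_0^{\mathrm{sym}}$, $D\overset{d}{=}\eta D$ for each fixed $\eta$. Hence if $\eta$ is drawn uniformly and independently of $D$, then $D\overset{d}{=}\eta D$ jointly, and
\[
\Pr[p(D)\le\alpha]=\Pr[p(\eta D)\le\alpha]=\E\big[\Pr_\eta[p(\eta D)\le\alpha\mid D]\big]\le\alpha .
\]
No independence across folds is used, only sign invariance of the joint law, which matches the remark before the proposition. The main point needing care is the ties bookkeeping in the counting lemma. I would handle it by the standard device of stating that at most $\lfloor\alpha2^F\rfloor$ group elements can achieve $p\le\alpha$, citing \citet[Thm.~15.2.1]{lehmann2005testing} for the general form rather than re-deriving it.

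Finally, the floor follows from the identity $S(-\varepsilon D)=S(\varepsilon D)$. Both $\varepsilon=\mathbf 1$ and $\varepsilon=-\mathbf 1$ satisfy $S(\varepsilon D)\ge S(D)$ with equality, so the count is at least $2$ and $p\ge2/2^F$. Rejection at level $\alpha$ therefore requires $2/2^F\le\alpha$, which is the condition $2^{F-1}\ge1/\alpha$.
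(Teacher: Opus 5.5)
Your argument is correct and is the same randomization-test argument the paper relies on: the paper cites \citet[Thm.~15.2.1]{lehmann2005testing} for the group $\{\pm1\}^F$ and uses the same $\varepsilon\leftrightarrow-\varepsilon$ pairing for the floor $p\ge 2/2^F$, and you have written out the orbit-counting step that the paper leaves to the citation. One correction to the wording of your ``first step'': the claim $p(\eta D)=p(D)$ is false (for $F=2$, $D=(1,1)$, $\eta=(1,-1)$ one gets $p(D)=1/2$ but $p(\eta D)=1$); what your bijection argument actually proves, and what your Steps~2--3 use, is the invariance of the orbit multiset, $\mathcal O(\eta D)=\mathcal O(D)$.
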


This is the randomization test for the group $\{\pm1\}^F$ \citep[Thm.~15.2.1]{lehmann2005testing}, Fisher's sign-flip test \citep[Ch.~III]{fisher1935design}. We take $F=8$, so the floor is $2/256$; the sign of $\bar D$ is read off after the two-sided test (E3, E11).

\subsection{Estimation}\label{app:rate}

\begin{proposition}[Recovery of the population profile; restated from \citet{kawamura2026intersection}]\label{prop:rate}
Let $A,B\subset\R^d$ be compact with reach at least $\tau>0$ and Hausdorff dimensions $k_A,k_B$ with finite, positive Hausdorff measure, sampled i.i.d.\ from distributions with densities bounded below with respect to the corresponding Hausdorff measures, and suppose the population overlap filtration $\rho\mapsto\U(A;\rho)\cap\U(B;\rho)$ has finitely many homological critical values and finite-dimensional homology. Then for $\epsilon,\delta>0$, samples of size $O\big(\epsilon^{-k}(\log(1/\epsilon)+\log(1/\delta))\big)$ from each, $k=\max(k_A,k_B)$, are $\epsilon$-dense in $A$ and $B$ with probability at least $1-2\delta$, and given $\epsilon$-density, $\Dx(r;X,Y)=\chi(\U(A;r)\cap\U(B;r))$ for every $r$ in a band $[r_{\min},r_{\max}]\subset(0,\tau)$ with $\epsilon<\min(r_{\min},\tau-r_{\max})/2$, outside an exceptional set $E_T\cup E_M$ of scales: $E_T$, an $O(\epsilon)$-neighbourhood of the population critical values, and $E_M$, the sample-artifact scales \citep[Thm.~5.6, Cor.~5.8, Rem.~5.9]{kawamura2026intersection}. The rate is in the intrinsic dimension \citep[Rem.~5.7]{kawamura2026intersection}, by the covering argument of \citet[Prop.~3.2]{niyogi2008finding}.
\end{proposition}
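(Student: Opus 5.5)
The proof has two independent halves. First, a covering argument shows that the stated sample size makes the clouds $\epsilon$-dense. Second, a deterministic interleaving argument shows that $\epsilon$-density forces the sample and population profiles to agree at regular scales. I would follow \citet[Thm.~5.6, Cor.~5.8]{kawamura2026intersection} and \citet[Prop.~3.2]{niyogi2008finding}, so what follows describes the structure rather than new mathematics.

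\emph{Density.} For $A$, take a maximal $\epsilon/2$-separated subset $\{a_j\}$. The balls $B(a_j,\epsilon/4)\cap A$ are disjoint. Positive reach gives each of them Hausdorff measure at least $c\,\epsilon^{k_A}$ for $\epsilon<\tau$, since a reach-$\tau$ set is locally a Lipschitz graph at small scales. Finiteness of the total measure then bounds the number of centres by $N_A=O(\epsilon^{-k_A})$. The density lower bound gives each ball $B(a_j,\epsilon/2)$ probability at least $p_A\ge c'\epsilon^{k_A}$. A union bound gives
\[
\Pr[\text{some ball empty}]\le N_A(1-p_A)^{m}\le N_A e^{-mp_A},
\]
which is at most $\delta$ once $m\ge p_A^{-1}(\log N_A+\log(1/\delta))=O\big(\epsilon^{-k_A}(\log(1/\epsilon)+\log(1/\delta))\big)$. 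On the complementary event every point of $A$ lies within $\epsilon$ of the sample. The same argument applied to $B$, followed by one more union bound, gives joint $\epsilon$-density with probability at least $1-2\delta$, at the rate in $k=\max(k_A,k_B)$.

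\emph{Interleaving.} Assume $X\subseteq A$ and $Y\subseteq B$ are $\epsilon$-dense. Then
\[
\U(X;r)\subseteq\U(A;r)\subseteq\U(X;r+\epsilon),
\]
and likewise for $Y$ and $B$. Intersecting these gives the sandwich
\[
M_r\subseteq M^{\mathrm{pop}}_r\subseteq M_{r+\epsilon}\subseteq M^{\mathrm{pop}}_{r+\epsilon},
\]
with $M^{\mathrm{pop}}_\rho=\U(A;\rho)\cap\U(B;\rho)$. The band condition $\epsilon<\min(r_{\min},\tau-r_{\max})/2$ keeps every scale used, from $r-\epsilon$ to $r+\epsilon$, inside $(0,\tau)$. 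There the population filtration is tame by hypothesis (finitely many critical values, finite-dimensional homology). I would then apply homology to a five-term chain, for instance
\[
M^{\mathrm{pop}}_{r-\epsilon}\subseteq M_r\subseteq M^{\mathrm{pop}}_r\subseteq M_{r+\epsilon}\subseteq M^{\mathrm{pop}}_{r+\epsilon}.
\]
If $r\notin E_T$, no population critical value lies in $[r-\epsilon,r+\epsilon]$, so the population composites are isomorphisms. This makes $H_\ast(M_r)\to H_\ast(M^{\mathrm{pop}}_r)$ surjective and $H_\ast(M^{\mathrm{pop}}_r)\to H_\ast(M_{r+\epsilon})$ injective.

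\emph{The main obstacle.} Surjectivity and injectivity on their own do not give an isomorphism. For that one also needs the sample inclusion $M_r\hookrightarrow M_{r+\epsilon}$ to be an isomorphism, and this is exactly the role of the sample-artifact set $E_M$. Off $E_M$, that inclusion is an isomorphism. The standard argument then applies: in a commuting square where both horizontal maps are isomorphisms, each diagonal map is injective and surjective. This yields $H_\ast(M_r)\cong H_\ast(M^{\mathrm{pop}}_r)$ with all groups finite-dimensional, and taking alternating sums gives $\Dx(r;X,Y)=\chi(M^{\mathrm{pop}}_r)$. I expect the delicate part to be bookkeeping rather than ideas. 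One must check that closed versus open balls and the exact width of the $O(\epsilon)$ neighbourhood defining $E_T$ are consistent with the shifts used, and that $E_M$ is defined precisely as the set of sample scales where this inclusion fails to be an isomorphism. I would adopt the foundations paper's definitions verbatim rather than re-derive them. Intrinsic-dimension dependence enters only through the covering step, as \citet[Rem.~5.7]{kawamura2026intersection} notes.
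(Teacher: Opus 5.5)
Your proposal is correct and takes the same route the paper cites rather than proves: the covering argument of \citet{niyogi2008finding} for $\epsilon$-density, then the sandwich $\U(A;r-\epsilon)\cap\U(B;r-\epsilon)\subseteq M_r\subseteq\U(A;r)\cap\U(B;r)\subseteq M_{r+\epsilon}$, with population critical scales excluded by $E_T$ and any failure of $H_\ast(M_r)\to H_\ast(M_{r+\epsilon})$ to be an isomorphism absorbed into $E_M$. One correction to the covering step: the lower bound $\mathcal{H}^{k_A}(B(a,\epsilon/4)\cap A)\ge c\,\epsilon^{k_A}$ does not follow from positive reach alone, since a disk together with a disjoint segment has positive reach and Hausdorff dimension $2$ while its segment is $\mathcal{H}^2$-null and is never sampled, so the bound must come from the submanifold volume estimate of \citet{niyogi2008finding}, which is the setting the cited argument implicitly assumes.
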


Uniform control of $|E_M|$ in the sample size is left open in the foundations paper (their Remark~5.9); Theorem~\ref{thm:false} below carries it as a hypothesis.

\subsection{The blind spot and the guard}\label{app:guard}

\begin{definition}
Under the hypotheses of Proposition~\ref{prop:rate}, $(A,B)$ is \emph{$\chi$-blind on $W\subset(0,\tau)$} if $\U(A;r)\cap\U(B;r)$ is nonempty and has Euler characteristic $0$ for every $r\in W$.
\end{definition}

Because Betti numbers are locally constant away from the critical scales, blindness holds on whole intervals of regular scales, not at isolated ones; annular and torus-shell overlaps are blind over an interval. On a blind pair the population profile vanishes both below first contact (Lemma~3.1 of the foundations paper) and on $W$: the profile of a blind pair is indistinguishable from that of a separated one.

\begin{theorem}[The unguarded separation certificate fails on the blind set]\label{thm:false}
Let $(A,B)$ satisfy the hypotheses of Proposition~\ref{prop:rate} and be $\chi$-blind at every scale of a grid range $G=[0,r_{\max}]$ at which the offsets meet, so that the population profile vanishes on all of $G$. Let $\Phi_G(\Dx)=\int_G|\Dx(r)|\,dr$, and let $m,n\to\infty$ with $m/(m+n)\to\pi\in(0,1)$ and $B\ge1/\alpha-1$ fixed. Assume
\begin{enumerate}
    \item[(i)] $|E_M\cap G|\to0$ in probability and $\sup_G|\Dx(\cdot;X,Y)|$ is stochastically bounded;
    \item[(ii)] the null statistic is bounded away from zero: for a uniformly random split $(X^\sigma,Y^\sigma)$ of the pooled sample, $\Pr[\Phi_G(\Dx(\cdot;X^\sigma,Y^\sigma))\ge c]\to1$ for some $c>0$.
\end{enumerate}
Then $\Phi_G(\Dx(\cdot;X,Y))\to0$ in probability, $p_\downarrow\to1/(1+B)\le\alpha$, and the unguarded separation test rejects with probability tending to one, although $\U(A;r)\cap\U(B;r)\neq\emptyset$ for every $r\ge r^\ast(A,B)$ in $G$.
\end{theorem}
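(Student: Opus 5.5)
The argument has three steps: the observed statistic goes to zero, every permuted statistic stays at least $c$, and the rank $p$-value is then forced to its floor. The geometric conclusion, that the offsets meet on $G$ past $r^\ast(A,B)$, is already part of the blindness hypothesis and needs no proof.

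\emph{Step 1: the observed statistic vanishes.} By Proposition~\ref{prop:rate}, with probability tending to one the samples are $\epsilon$-dense for any fixed $\epsilon$. On that event, for $r\in G$ outside $E_T\cup E_M$, the sample profile equals the population profile $\chi(\U(A;r)\cap\U(B;r))$. This value is $0$ everywhere on $G$: below first contact by Lemma~3.1 of the foundations paper, and at every meeting scale by blindness. Hence
\[
\Phi_G(\Dx(\cdot;X,Y))\le \sup_G|\Dx(\cdot;X,Y)|\cdot\big(|E_T\cap G|+|E_M\cap G|\big).
\]
Three facts finish the step. $|E_T\cap G|=O(\epsilon)$, because there are finitely many population critical values. $|E_M\cap G|\to0$ in probability by (i). The supremum is stochastically bounded by (i).

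Letting $n\to\infty$ and then $\epsilon\to0$ gives $\Phi_G\to0$ in probability. The main obstacle is reconciling the band $[r_{\min},r_{\max}]\subset(0,\tau)$ of Proposition~\ref{prop:rate} with $G=[0,r_{\max}]$. Near $0$ I would use that the sample offsets are disjoint below the sample first-contact scale, and that this scale converges to the population one (Lemma~\ref{lem:rstar}). The leftover $[0,r_{\min})$ then has length at most $r_{\min}$, multiplied by a bounded supremum, so it can be taken small. I would therefore let $r_{\min}\to0$ together with $\epsilon$, as the band condition $\epsilon<r_{\min}/2$ permits.

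\emph{Step 2: every permuted statistic exceeds $c$.} Under a uniformly random split $\sigma_b$, drawn independently of the data, (ii) gives $\Pr[T^{\sigma_b}\ge c]\to1$ for each $b$. Since $B$ is fixed, a union bound gives $\Pr[\min_b T^{\sigma_b}\ge c]\to1$. Combining with Step~1, $\Pr[T_{\mathrm{obs}}<c]\to1$. On the intersection of these two events no replicate satisfies $T^{\sigma_b}\le T_{\mathrm{obs}}$, so $p_\downarrow=1/(1+B)$.

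\emph{Step 3: the test rejects.} Step~2 gives $p_\downarrow\to1/(1+B)$ in probability. The assumption $B\ge1/\alpha-1$ makes $1/(1+B)\le\alpha$, so $\Pr[p_\downarrow\le\alpha]\to1$ and the unguarded test rejects with probability tending to one.
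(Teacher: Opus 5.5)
Your proposal is correct and follows the paper's proof step for step. The observed statistic goes to zero because the sample and population profiles agree off $E_T\cup E_M$ and hypothesis (i) controls that set, and hypothesis (ii) pushes every replicate above the observed value, so $p_\downarrow$ hits its floor $1/(1+B)\le\alpha$. The explicit union bound over the fixed $B$ replicates and the treatment of the leftover interval $[0,r_{\min})$ (which the paper's proof skips by applying the recovery proposition on all of $G$) are added care within the same argument, not a different route.
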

\begin{proof}
By Proposition~\ref{prop:rate}, off $E_T\cup E_M$ the sample profile equals the population profile on $G$, which is $0$ on $G$: below first contact by Lemma~3.1 of the foundations paper, and beyond it by blindness. Since $|E_T\cap G|=O(\epsilon)$ and $|E_M\cap G|\to0$ by (i), the bound on $\sup_G|\Dx|$ in (i) gives $\Phi_G(\Dx(\cdot;X,Y))\to0$ in probability. By (ii), with probability tending to one every one of the $B$ null replicates exceeds the observed statistic, so $\#\{b:T^{\sigma_b}\le T_{\mathrm{obs}}\}\to0$ and $p_\downarrow\to1/(1+B)$.
\end{proof}

Hypothesis (ii) is the one that must be checked per configuration, and it has two sufficient conditions. Asymptotically, if the pooled support $A\cup B$ itself satisfies the hypotheses of Proposition~\ref{prop:rate} (positive reach fails when $A$ and $B$ cross, but holds for disjoint supports such as two concentric circles) and $\chi(\U(A\cup B;r))\neq0$ on a subset of $G$ of positive measure, then both halves of a random split are dense samples of $A\cup B$ from the mixture density, and the proposition applied to $(A\cup B,A\cup B)$ gives $c=\int_G|\chi(\U(A\cup B;r))|\,dr>0$. At finite sample sizes, (ii) also holds whenever the first-contact scale $r^\ast(A,B)$ exceeds the connectivity scale of the pooled sample, since the two halves of a random split then meet, in many components, at scales where the labeled offsets are still disjoint; this is the regime of Figure~\ref{fig:intro_blindspot}, where $p_\downarrow=0.005$ on two rings whose population profile is zero everywhere. The mechanism the theorem isolates is that $\Dx$ conflates ``empty overlap'' with ``overlap with cancelling Betti numbers'', while the permutation null only ever produces the fat, non-cancelling overlap of a random split.

\begin{lemma}[Consistency of the first-contact scale]\label{lem:rstar}
Let $\hat r^\ast=\tfrac12\min_{i,j}\|x_i-y_j\|$ and suppose the samples are $\epsilon$-dense in $A$ and $B$. Then $r^\ast(A,B)\le\hat r^\ast\le r^\ast(A,B)+\epsilon$.
\end{lemma}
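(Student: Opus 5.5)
Both inequalities follow from the elementary identity $r^\ast(S,T)=\tfrac12\operatorname{dist}(S,T)=\tfrac12\inf_{s\in S,t\in T}\|s-t\|$ for compact sets and closed balls, applied once to the samples and once to the supports. I first record this identity. The offsets $\U(S;r)$ and $\U(T;r)$ meet iff some point lies within $r$ of both $S$ and $T$. By the triangle inequality this forces $\operatorname{dist}(S,T)\le 2r$. Conversely, the midpoint of a closest pair, which exists by compactness, witnesses the intersection at $r=\tfrac12\operatorname{dist}(S,T)$. So the infimum defining $r^\ast$ is $\tfrac12\operatorname{dist}(S,T)$, and for finite clouds it is exactly $\hat r^\ast$.

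\emph{Lower bound.} I take the samples to lie in the supports, $X\subseteq A$ and $Y\subseteq B$, as in the sampling model of Proposition~\ref{prop:rate}. The minimum over the subset $X\times Y$ is then at least the infimum over $A\times B$, so $\hat r^\ast=\tfrac12\min_{i,j}\|x_i-y_j\|\ge\tfrac12\operatorname{dist}(A,B)=r^\ast(A,B)$. Equivalently, $\U(X;r)\subseteq\U(A;r)$ and $\U(Y;r)\subseteq\U(B;r)$, so the sample offsets cannot meet before the population ones do.

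\emph{Upper bound.} I pick $a\in A$ and $b\in B$ attaining $\|a-b\|=\operatorname{dist}(A,B)$, which is possible by compactness. By $\epsilon$-density I choose $x_i$ with $\|x_i-a\|\le\epsilon$ and $y_j$ with $\|y_j-b\|\le\epsilon$. The triangle inequality then gives $\|x_i-y_j\|\le\operatorname{dist}(A,B)+2\epsilon$. Halving yields $\hat r^\ast\le r^\ast(A,B)+\epsilon$; the factor $\tfrac12$ in $r^\ast$ exactly absorbs the two $\epsilon$'s.

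There is no real obstacle. The only care needed is to state the conventions: closed balls, compact supports, samples contained in the supports, and $\epsilon$-density meaning every support point is within $\epsilon$ of some sample point. If samples were only near the supports, the lower bound would degrade to $r^\ast-\epsilon$, so containment is the assumption I would state explicitly.
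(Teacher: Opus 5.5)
Your proof is correct and follows the paper's argument: the lower bound comes from $X\subseteq A$, $Y\subseteq B$, and the upper bound comes from approximating a closest pair $(a,b)$ of the compact supports to within $\epsilon$ and applying the triangle inequality. Your preliminary identity $r^\ast(S,T)=\tfrac12\operatorname{dist}(S,T)$ spells out what the paper uses implicitly when it takes $a,b$ ``attaining $r^\ast$,'' and your remark that containment of the samples in the supports must be assumed is apt.
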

\begin{proof}
The lower bound holds because $X\subset A$ and $Y\subset B$. For the upper bound take $a\in A$, $b\in B$ attaining $r^\ast$ (compactness); by $\epsilon$-density there are $x_i,y_j$ within $\epsilon$ of them, and $\|x_i-y_j\|\le\|a-b\|+2\epsilon$.
\end{proof}

\begin{corollary}[The guard on the blind set]\label{prop:guard}
Let $(A,B)$ be $\chi$-blind on an interval $W=[r^\ast(A,B)+\eta,\,r^\ast(A,B)+\eta']$ and let the guard fire when $\Dx(\cdot;X,Y)$ vanishes on a window $[\hat r^\ast+\eta_0,\hat r^\ast+\eta_1]$ with $\eta\le\eta_0<\eta_1\le\eta'-\epsilon$. Under the sampling of Proposition~\ref{prop:rate} and hypothesis (i) of Theorem~\ref{thm:false} on $W$, the window lies inside $W$ and the guard fires with probability tending to one.
\end{corollary}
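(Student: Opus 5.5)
The argument has two parts: first place the window inside $W$ using Lemma~\ref{lem:rstar}, then show that the sample profile vanishes on the window with probability tending to one, by the same mechanism as in the proof of Theorem~\ref{thm:false}.

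\emph{Step 1: the window lies in $W$.} Under the sampling of Proposition~\ref{prop:rate}, the samples are $\epsilon$-dense in $A$ and $B$ with probability at least $1-2\delta$. On that event, Lemma~\ref{lem:rstar} gives
\[
r^\ast(A,B)\le\hat r^\ast\le r^\ast(A,B)+\epsilon.
\]
The left end of the window therefore satisfies $\hat r^\ast+\eta_0\ge r^\ast(A,B)+\eta$, because $\eta_0\ge\eta$. The right end satisfies
\[
\hat r^\ast+\eta_1\le r^\ast(A,B)+\epsilon+\eta_1\le r^\ast(A,B)+\eta',
\]
because $\eta_1\le\eta'-\epsilon$. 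So the random window $[\hat r^\ast+\eta_0,\hat r^\ast+\eta_1]$ is contained in $W$. As the sample size grows, $\epsilon$ and $\delta$ can be taken to $0$, so this containment holds with probability tending to one. I will also use $W\subset(0,\tau)$, so that $W$ sits inside a band where Proposition~\ref{prop:rate} applies, after shrinking by $\epsilon$ at its ends as that proposition requires.

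\emph{Step 2: the profile vanishes on the window.} For every $r\in W$, blindness gives $\chi(\U(A;r)\cap\U(B;r))=0$. By Proposition~\ref{prop:rate}, on the dense event
\[
\Dx(r;X,Y)=\chi(\U(A;r)\cap\U(B;r))=0
\]
for all $r\in W$ outside $E_T\cup E_M$. Since the Betti numbers are locally constant on the blind interval, I would argue that $W$ is, up to its endpoints, free of population critical values, so $E_T\cap W$ is confined to $O(\epsilon)$-neighbourhoods of the endpoints of $W$. By hypothesis (i), $|E_M\cap W|\to0$ in probability.

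\emph{Main obstacle.} The guard needs $\Dx$ to vanish on the whole window, but (i) controls only the measure of $E_M$, not whether it is empty. I would handle this in one of two ways. The first is to read ``vanishes'' in the grid sense the paper uses: if the grid points in the window avoid a set whose measure tends to zero, then the probability that any of the finitely many grid points falls in $E_M$ tends to zero. The second, when the statement is meant on the continuum, is to combine the measure bound with the step-function structure of $\Dx$, whose breakpoints are finitely many filtration critical values, so that a nonzero stretch of positive length in the window would eventually contradict $|E_M\cap W|\to0$ if every nonzero stretch is forced to have length bounded below. I would state the grid version as the proof and note explicitly the extra regularity needed for the continuum version.

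Under either reading, intersecting the dense event, the containment event of Step 1 and the event that the window avoids $E_M$ gives an event of probability tending to one. On that event the guard fires.
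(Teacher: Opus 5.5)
\textbf{Main line.} Your Steps~1 and~2 are the paper's proof. On the dense event, Lemma~\ref{lem:rstar} gives $\hat r^\ast\in[r^\ast,r^\ast+\epsilon]$. The constraints $\eta\le\eta_0$ and $\eta_1\le\eta'-\epsilon$ then put the window inside $W$. Proposition~\ref{prop:rate} makes the sample profile agree with the zero population profile off $E_T\cup E_M$. The paper's proof ends at $|(E_T\cup E_M)\cap W|\to0$. What it actually delivers is that $\Dx(\cdot;X,Y)$ vanishes on the window off a set of measure tending to zero in probability. You are right that vanishing at \emph{every} point of the window does not follow from that.

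\textbf{The gap: your repair.} The problem is the repair you choose to state as the proof.
\begin{itemize}
\item \emph{Grid version.} The argument does not go through. A set whose Lebesgue measure tends to zero can still contain any prescribed finite set of points; nothing in hypothesis~(i) prevents $E_M$ from being a shrinking interval around one grid point. So (i) gives no control over whether the grid points of the window land in $E_M$. These points are moreover data-dependent, since the top of the grid is half the median pairwise distance of the pooled cloud.
\item \emph{Continuum version.} It rests on a lower bound for the length of the stretches where the sample profile is nonzero, which (i) does not supply.
\end{itemize}
Either version needs an extra hypothesis, for instance $\Pr[r_g\in E_M]\to0$ at each grid point $r_g$ of the window, together with control of $E_T$ there.

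\textbf{Endpoint claim.} Your claim that $E_T\cap W$ is confined to neighbourhoods of the endpoints of $W$ also fails. Blindness fixes only $\chi$, so Betti numbers can jump inside $W$ as long as they jump together. For example, a second concentric pair of rings coming into contact at a scale inside $W$ adds a circle, raising $b_0$ and $b_1$ by one each.

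\textbf{What (i) does give.} None of this is needed for the measure statement. $E_T$ is an $O(\epsilon)$-neighbourhood of finitely many critical values, so $|E_T\cap W|=O(\epsilon)$. Letting $\epsilon\to0$ as in your Step~1,
\[
\bigl|\{r\in[\hat r^\ast+\eta_0,\hat r^\ast+\eta_1]:\Dx(r;X,Y)\neq0\}\bigr|\;\le\;|(E_T\cup E_M)\cap W|\;\to\;0
\]
in probability. Adding the stochastic boundedness of $\sup|\Dx|$, the other half of (i), also gives $\int_{\hat r^\ast+\eta_0}^{\hat r^\ast+\eta_1}|\Dx(r;X,Y)|\,dr\to0$, exactly as in the proof of Theorem~\ref{thm:false}. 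State the guard's firing in that almost-everywhere (or integrated) sense, which is the sense the paper's proof establishes, rather than claiming the grid version from (i).
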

\begin{proof}
By Lemma~\ref{lem:rstar}, $\hat r^\ast\in[r^\ast,r^\ast+\epsilon]$ with probability tending to one, so the window lies in $W$, where the population profile is zero; off $E_T\cup E_M$ the sample profile agrees with it, and $|(E_T\cup E_M)\cap W|\to0$.
\end{proof}

The corollary covers the campaign's guard, whose window is the quarter of the grid immediately beyond $\hat r^\ast$, once that quarter lies inside the blind interval; the plateau form of Section~\ref{sec:tests} additionally waits out the contact transient, which the corollary's $\eta_0>0$ models. Neither form has a finite-sample guarantee beyond this mechanism, and Experiment~E4 reports both firing on none of the campaign's cells. $\hat r^\ast$ decides \emph{whether} the clouds interact; $\Dx$ describes \emph{how}; only the second is subject to cancellation. Projection is compatible with the rule $\hat r^\ast>r$ by Proposition~\ref{prop:projection}, which can only lower $\hat r^\ast$; it changes the profile beyond contact, so the silence-based forms are statements about the shadow.

\subsection{The concentration measurements}\label{app:concentration}

Table~\ref{tab:concentration} reports, for the frozen representations of Section~\ref{sec:limitations}, the ambient dimension, the participation-ratio effective rank, the \emph{scale window}---the fractional gap between the first-percentile cross-class distance and the closest cross pair, at $100$ points per class---and, for the two ImageNet backbones on which the projected certificate was attempted, the realized tightness $r^\ast(PX,PY)/r^\ast(X,Y)$ of the $d_0=5$ certificate. For the vision transformers the window is given for the class token and for the mean-pooled patch tokens of the same forward pass. The window divides by a minimum, an extreme order statistic, so its absolute value is specific to the sample cap; ratios between layers of one model are stable across caps ($3.4\times$ at $100$ and $3.6\times$ at $200$ points per class for ResNet-50 stage~1 to stage~4) and are what the text compares. GPT-2's final layer has effective rank near $1$ and its window is not interpretable.

\begin{table}[H]
\centering\small
\begin{tabular}{@{}llrrrrr@{}}
\toprule
model & layer & $D$ & eff.\ rank & window & window (mean-pool) & tightness\\
\midrule
ResNet-50 & stage~1 & 256 & 6 & 0.61 & -- & 0.26\\
 & stage~2 & 512 & 13 & 0.37 & -- & 0.19\\
 & stage~3 & 1024 & 33 & 0.24 & -- & 0.14\\
 & stage~4 & 2048 & 43 & 0.18 & -- & 0.13\\
ViT-B/16 & block~3 & 768 & 9 & 0.44 & 0.47 & 0.22\\
 & block~6 & 768 & 27 & 0.24 & 0.24 & 0.11\\
 & block~9 & 768 & 95 & 0.15 & 0.20 & 0.08\\
 & block~12 & 768 & 46 & 0.14 & 0.28 & 0.10\\
 & final & 768 & 56 & 0.14 & 0.28 & 0.12\\
ViT-L/16 & block~6 & 1024 & 15 & 0.28 & 0.33 & --\\
 & block~12 & 1024 & 91 & 0.13 & 0.22 & --\\
 & block~18 & 1024 & 95 & 0.13 & 0.28 & --\\
 & block~24 & 1024 & 26 & 0.12 & 0.19 & --\\
 & final & 1024 & 81 & 0.10 & 0.12 & --\\
CLIP ViT-B/32 & block~3 & 768 & 7 & 0.44 & -- & --\\
 & block~6 & 768 & 22 & 0.25 & -- & --\\
 & block~9 & 768 & 25 & 0.21 & -- & --\\
 & final & 768 & 40 & 0.23 & -- & --\\
DINOv2 ViT-S/14 & final & 384 & 66 & 0.24 & -- & --\\
BERT-base & block~3 & 768 & 20 & 0.16 & -- & --\\
 & block~6 & 768 & 20 & 0.19 & -- & --\\
 & block~9 & 768 & 25 & 0.20 & -- & --\\
 & final & 768 & 34 & 0.19 & -- & --\\
GPT-2 & block~3 & 768 & 2 & 0.22 & -- & --\\
 & block~6 & 768 & 2 & 0.19 & -- & --\\
 & block~9 & 768 & 6 & 0.22 & -- & --\\
 & final & 768 & 1 & 0.60 & -- & --\\
\bottomrule
\end{tabular}

\caption{Scale window, effective rank, and certificate tightness on frozen representations (CIFAR-10 test images through the vision models; ten classes of 20~Newsgroups through the language models).}
\label{tab:concentration}
\end{table}

\section{Experiment E6 in full: the differentiable surrogate}\label{app:e6}

This appendix gives the E6 campaign of Section~\ref{sec:experiments} in full.

We test whether the surrogate $\mathcal{L}_{\mathrm{dis}}$ of Experiment~E6 can \emph{control} disentanglement, not merely measure it. We train ResNet-20 on CIFAR-10 at the E3 recipe with $\mathcal{L}=\mathcal{L}_{\mathrm{CE}}+\lambda\,\mathcal{L}_{\mathrm{dis}}$ for $\lambda\in\{0,0.5,2,8\}$ ($\lambda=0$ the matched control), $2$ seeds each, and evaluate three quantities per model: clean test accuracy, a robustness proxy (mean accuracy under additive Gaussian input noise at $\sigma\in\{0.05,0.1,0.2\}$), and---closing the loop---the penultimate-layer mean interaction quotient measured \emph{post hoc} by the exact ECP permutation test (the same certified, non-differentiable protocol as E1--E5, $m=200$, $d_0=5$, $B=199$).

\emph{Result (the first campaign steered the wrong way).} With $\sigma^2$ \emph{detached} from the graph---the natural implementation, and the one we ran first---the certified penultimate quotient rose monotonically in $\lambda$: $0.159 \to 0.212 \to 0.278 \to 0.344$ at $\lambda=0, 0.5, 2, 8$, the two seeds tight at every level and all $45$ pairs certified separated in every model, while test accuracy fell $92.0\% \to 86.5\%$ and the robustness proxy $78.9\% \to 72.6\%$. The surrogate made the classes \emph{more} entangled, as measured by the exact test. The cause is instructive. Detaching $\sigma^2$ leaves the loss \emph{value} scale-invariant (verified: identical under every rescaling of the features) but not its \emph{gradient}: backpropagation sees a fixed bandwidth, so the radial component of the gradient is negative and descent inflates feature norms---with the bandwidth frozen, scaling the features by $20$ drives the loss to machine zero. The ECP is genuinely scale-equivariant, so the inflation bought nothing there, and the capacity spent on it appears as the accuracy cost. A scale-invariant loss whose gradient is not scale-invariant is an easy trap, and the certified measurement is what caught it. Keeping $\sigma^2$ in the graph makes the radial gradient vanish identically, so scaling is no longer a descent direction. That closes one exploit; the corrected campaign shows it opens another.

\emph{Result (the corrected campaign steers the wrong way too).} With $\sigma^2$ in the graph, the certified penultimate quotient still rises monotonically in $\lambda$: $0.167 \to 0.250 \to 0.488$ at $\lambda=0, 0.5, 2$ (means of two seeds, which agree within $0.03$ at every level; all $45$ pairs certified separated in every model), while test accuracy falls $92.1\% \to 91.0\% \to 65.7\%$ and the robustness proxy $79.8\% \to 79.7\% \to 62.9\%$. At $\lambda=8$ training fails outright: test accuracy averages $16\%$ ($10\%$ and $22\%$ for the two seeds), one seed's penultimate representation collapsed to a single feature vector (recorded as degenerate; no quotient is defined), and the other sits at quotient $0.88$, statistically indistinguishable from a random relabeling. The second failure has a second mechanism. Once the radial direction is closed, the cheapest way to shrink a within-class-normalized cross-class kernel is to grow the within-class radius: dispersing each class lowers every $\|f(x)-f(y)\|^2/\sigma^2$ without moving the classes apart, and as $\sigma^2$ approaches the cross-class scale the classes interpenetrate, which is exactly what the exact ECP measures. The surrogate's value goes down; the certified entanglement goes up.

\emph{What E6 establishes} is therefore not a regularizer but a measurement result. Two natural implementations of a differentiable surrogate for the interaction quotient---one gamed through scale, one through dispersion---both decrease the surrogate while increasing the certified entanglement, and only the exact, non-differentiable test sees it. A surrogate that provably tracks $\Dx$ under optimization is open, and we would not trust one without the certificate behind it.

\section{Experiment E7 in full: the interaction spectrum}\label{app:e7}

This appendix gives the E7 campaign of Section~\ref{sec:experiments} in full.
On the final-epoch representations of the E2 models and ResNet-20, compute the full spectrum $\widehat E(C_j)$, $j=2,\dots,5$, on three fixed $5$-class subsets ($2^5-1$ pooled Euler curves per evaluation, permutation quotients per floor), the top floors $j\in\{8,9,10\}$ of all ten classes, and the onset vector $r^\ast_2\le\dots\le r^\ast_{10}$ per layer. Deliverables: the decay rate of $\widehat E(C_j)$ in $j$ (the quantitative form of pairwise dominance), whether any deep many-class overlap survives at the top floors, and depth profiles of the onset delays.

\emph{Result: decay.} The spectrum decays monotonically in $j$ in every layer of every model, and the decay rate is itself depth-graded: the mean step ratio $\widehat E(C_{j+1})/\widehat E(C_j)$ over $j=2,\dots,4$ is $\approx 0.88$ at the stem but $\approx 0.61$ at the deepest stage (range $0.40$--$0.83$ across deep layers vs.\ $0.80$--$0.93$ shallow)---pairwise dominance is strongest exactly where disentanglement happens. At $k=10$, the top floors $j\in\{8,9,10\}$ are negligible in the deepest stage of every ResNet (quotients $0.015$--$0.07$, against $0.44$--$0.73$ at the stem) and small in the ViT ($0.09$--$0.23$ at the penultimate block); these ran at $B=19$, where $p_\downarrow=0.05$ is the smallest attainable value, so we report them as magnitude statements rather than sharp certifications.

\emph{Result: the second floor as a layer summary.} The second floor $\widehat E(C_2)$ of a five-class family is one number per layer with one permutation test, and on the three fixed subsets it certifies separation ($p_\downarrow=0.01$ at $B=99$) in every layer of every model except two shallow cells, while ranking the layers exactly as the mean of the ten pairwise quotients does: for the CIFAR-10 ResNet-56, $0.93/0.78/0.90$ at the stem against pairwise means $0.86/0.67/0.82$, falling to $0.41/0.28/0.38$ against $0.30/0.15/0.21$ at stage~3; for the CIFAR-100 subset, $0.09/0.19/0.09$ against $0.04/0.10/0.06$ at stage~3. The floor sits above the pairwise mean throughout, as it must: it is the union of the pairwise overlaps normalized by that union's own null. It also sees the ViT head, rising from $0.71/0.31/0.38$ at block~8 to $0.67/0.60/0.70$ at the penultimate block. A layer-level claim with a single test is therefore available from the same sweep whenever the multiplicity of a pairwise family is unwelcome.

\emph{Result: onsets.} The onset vectors tell the same story in the scale domain. At the CIFAR-10 ResNet-56 penultimate stage the $10$-fold overlap onsets at $r^\ast_{10}\approx1.85$, twenty times the pairwise onset $r^\ast_2\approx0.09$, and the delays stretch monotonically through the vector ($0.09, 0.34, 0.64, 0.91, \dots$); at the stem the entire vector is compressed into $[0.007, 0.069]$. Deep representations do not merely separate pairs---they push every higher-order overlap out to scales far beyond the pairwise interaction range, which is the geometric mechanism behind the dominance of E2, now quantified floor-by-floor (Figure~\ref{fig:e7}).

\begin{figure}[H]
\centering
\includegraphics[width=0.66\textwidth]{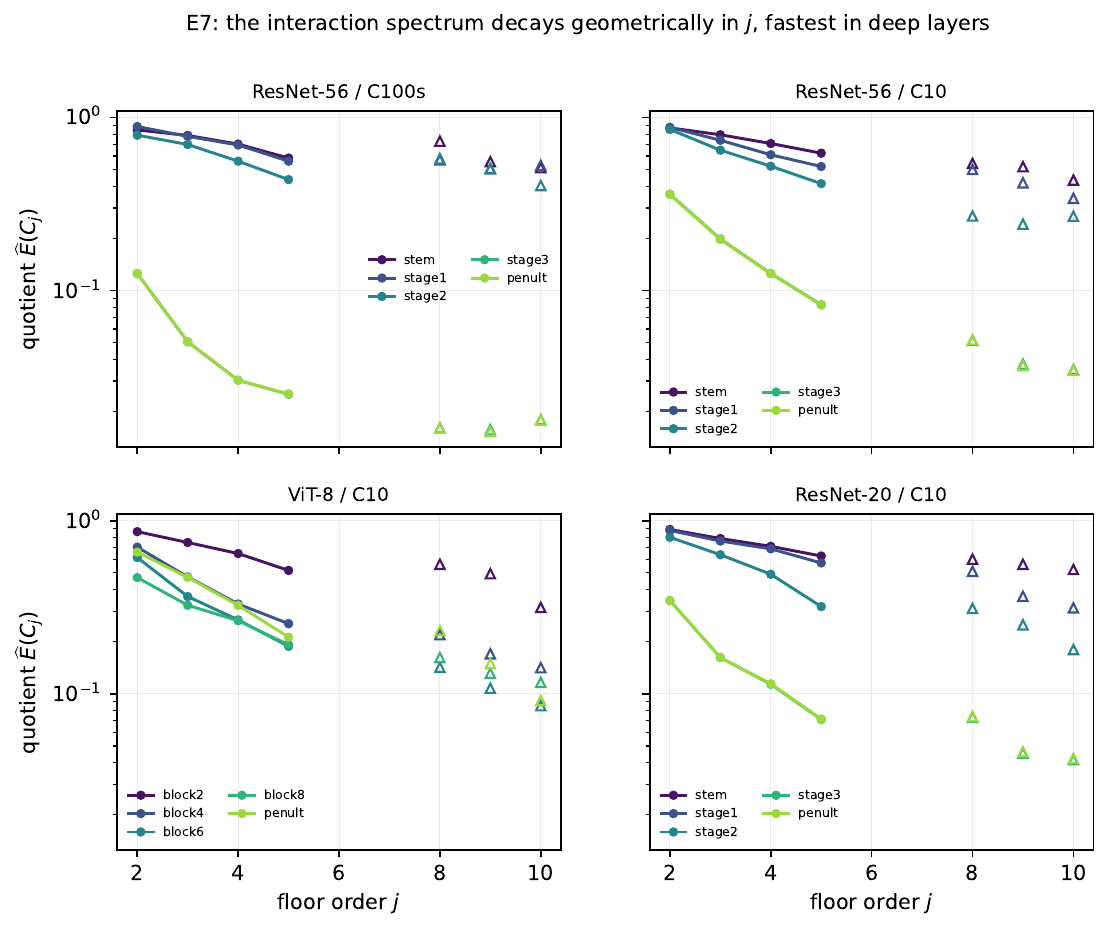}\\[2pt]
\includegraphics[width=0.66\textwidth]{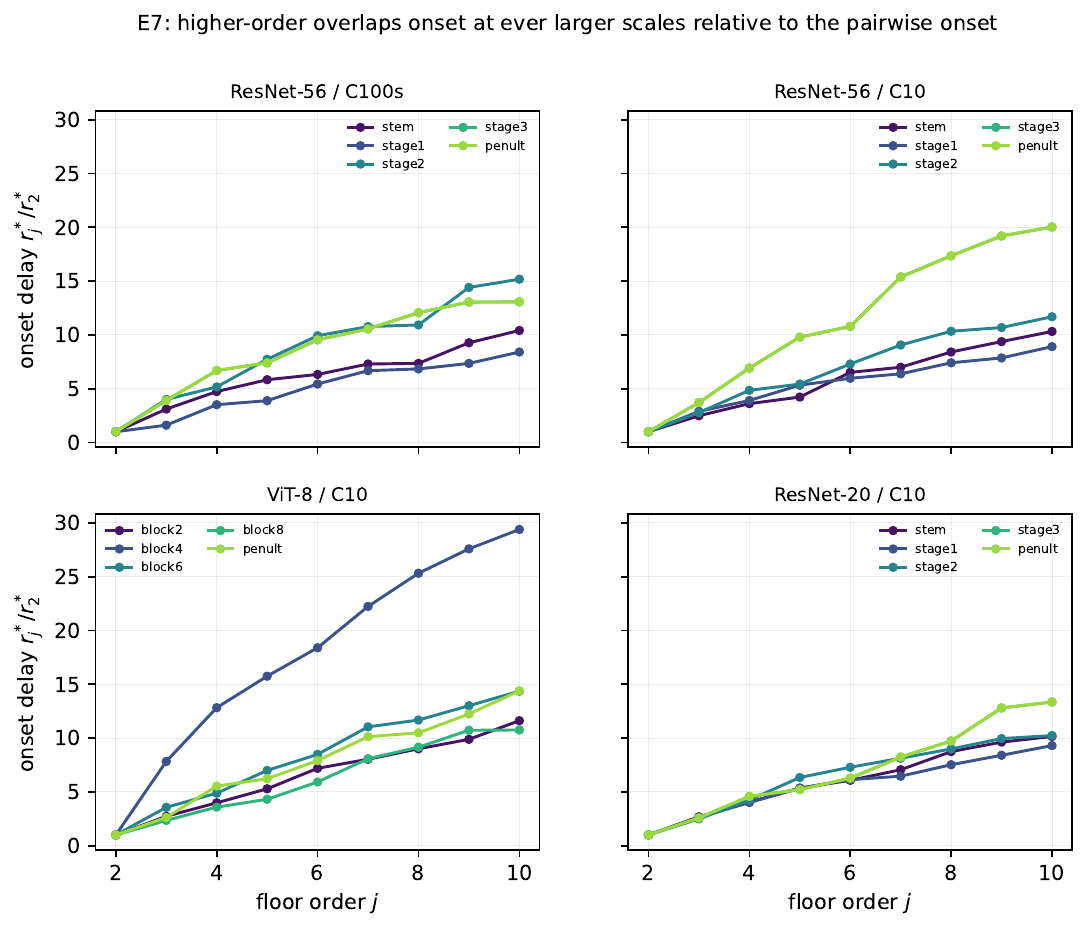}
\caption{The interaction spectrum (E7). \emph{Top:} the floor quotients $\widehat E(C_j)$ decay geometrically in $j$ (solid: $j\le5$ on five-class subsets; open: the top floors of all ten classes, $B=19$), steepest in deep layers. \emph{Bottom:} the onset delays $r^\ast_j/r^\ast_2$ grow with $j$.}
\label{fig:e7}
\end{figure}

\section{Experiment E8 in full: the projection dimension}\label{app:e8}

This appendix gives the E8 campaign of Section~\ref{sec:experiments} in full.
The whole campaign measures on a $d_0=5$ PCA shadow; E1 already sweeps $d_0$ on MNIST, and here we re-measure the depth conclusions of E2 at $d_0\in\{3,4,6\}$ on all three E2 models and the pretrained ViT-B/16, final epochs, full pairwise matrices ($B=199$), comparing each layer's $45$-pair quotient vector against the $d_0=5$ reference.

\emph{Result.} Every qualitative conclusion is invariant to $d_0$. The depth grading is unchanged at every projection dimension---the mean quotient falls monotonically from stem to deep stages in both ResNets (e.g.\ CIFAR-10 ResNet-56 $0.75\to0.22$, $0.72\to0.18$, $0.72\to0.16$ at $d_0=3,4,6$) and the ViT's non-monotone rise into the penultimate block persists at all $d_0$, while the pretrained ViT stays monotone throughout---and deep layers are fully certified separated ($45/45$) at every $d_0$. The pair \emph{ranking} is stable and tightens as $d_0$ grows: Spearman against the $d_0=5$ reference has median $0.91$ ($d_0=3$), $0.96$ ($d_0=4$), $0.97$ ($d_0=6$). The few lower correlations ($\rho=0.56$--$0.69$) occur only at the coarsest $d_0=3$ and only where they cannot change a conclusion: on deep layers whose pairs are all certified separated with near-null quotients (rank noise among indistinguishable-from-zero values) or on the harder CIFAR-100 subset's shallow layers, where the absolute quotients barely move. The measured shadow is low-dimensional, but the disentanglement structure we report is not an artifact of the particular $d_0$.

\section{A coupling we do not claim: grokking}\label{app:grok}

Grokking \citep{power2022grokking} offered a test of whether the measure tracks generalization as it happens: a two-layer transformer on $(a+b)\bmod 97$ at a $40\%$ train fraction, three seeds at weight decay $1.0$ and a non-generalizing control at $0$, checkpointed every $100$ steps across the transition and compared checkpoint to checkpoint with the paired test of Section~\ref{sec:paired}, the fold statistic averaged over the ten residue-class pairs. On the raw profile mass the certified change and the change in validation accuracy were anticorrelated at $r=-0.80$; that was the feature-norm shrinkage that weight decay produces across the transition, the artifact of E3 in another guise. On the scale-free statistic the contemporaneous coupling is $r=-0.33$ ($p=0.03$ over $42$ pooled and autocorrelated intervals; $-0.57$, $-0.34$, $-0.43$ per seed), shuffled class groupings reverse its sign ($+0.42$ to $+0.45$), the control shows none, and no lag is significant. That is the right sign and the right specificity at a magnitude we do not consider a finding, and we report it here rather than in the results.

\bibliographystyle{plainnat}
\bibliography{refs}

\end{document}